\documentclass[10pt,fullpage,letterpaper]{article}

\oddsidemargin 0in
\evensidemargin 0in
\textwidth 6.5in
\topmargin -0.5in
\textheight 9.0in

\usepackage{etoolbox}
\newtoggle{arxiv}
\toggletrue{arxiv}

\usepackage{graphicx, setspace, latexsym,amsmath,amssymb,color}
\usepackage{amsthm}
\usepackage{quoting}
\usepackage{dsfont}
\usepackage{bbm}
\usepackage{longtable}
\usepackage{hyperref}
\usepackage{algorithm}
\usepackage{lineno}
\usepackage{makecell}
\usepackage{parskip}
\usepackage[normalem]{ulem}
\usepackage{caption}
\usepackage{subcaption}
\usepackage{algpseudocode}
\usepackage{cleveref}
\usepackage{todonotes}
\usepackage{wrapfig}
\usepackage{xcolor}
\usepackage{enumitem,kantlipsum}
\newcommand{\loose}{\looseness=-1}
\usepackage{xspace}
\usepackage{comment}
\usepackage{array}

\newcommand{\mc}[1]{\mathcal{#1}}

\newcommand{\lamdiff}{\lambda_{\mathrm{diff}}}
\newcommand{\lamfair}{\lambda_{\mathrm{fair}}}

\def \ours{\textsc{Fare}\xspace}
\def \E{\mathbb E}
\def \P{\mathbb P}
\def \R{\mathbb R}

\DeclareMathOperator*{\argmin}{arg\,min}

\def \st2{\texttt{\texttt{$($ST$)^2$}}}

\def \X{{\cal X}}

\def \V{{\cal V}}

\def \H{{\cal H}}

\setlength\parindent{0pt}   

\def \1{\mathbbm{1}}
\def \0{\textbf{0}}

\newtheorem{theorem}{Theorem}[section]

\newtheorem{proposition}[theorem]{Proposition}

\newtheorem{definition}{Definition}

\usepackage[round, sort&compress]{natbib}

\usepackage[utf8]{inputenc} 
\usepackage[T1]{fontenc}    
\usepackage{hyperref}       
\usepackage{url}            
\usepackage{booktabs}       
\usepackage{amsfonts}       
\usepackage{nicefrac}       
\usepackage{microtype}      
\usepackage{xcolor}         


\begin{document}

\author{%
  Romain Camilleri,
  Andrew Wagenmaker,
  Jamie Morgenstern,
  Lalit Jain,
  Kevin Jamieson\\
  University of Washington, Seattle, WA\\
  \texttt{\{camilr,ajwagen,jamiemmt,jamieson\}@cs.washington.edu,lalitj@uw.edu}
}
\title{Fair Active Learning in Low-Data Regimes}
\maketitle

\begin{abstract}

In critical machine learning applications, ensuring fairness is essential to avoid perpetuating social inequities. In this work, we address the challenges of reducing bias and improving accuracy in data-scarce environments, where the cost of collecting labeled data prohibits the use of large, labeled datasets. In such settings, active learning promises to maximize marginal accuracy gains of small amounts of labeled data. However, existing applications of active learning for fairness fail to deliver on this, typically requiring large labeled datasets, or failing to ensure the desired fairness tolerance is met on the population distribution.

To address such limitations, we introduce an innovative active learning framework that combines an exploration procedure inspired by posterior sampling with a fair classification subroutine. We demonstrate that this framework performs effectively in very data-scarce regimes, maximizing accuracy while satisfying fairness constraints with high probability. We evaluate our proposed approach using well-established real-world benchmark datasets and compare it against state-of-the-art methods, demonstrating its effectiveness in producing fair models, and improvement over existing methods.

\end{abstract}


\section{INTRODUCTION}
As machine learning models proliferate and are used in an ever-increasing number of applications with societal ramifications, it has become increasingly important to have robust methods for developing models that do not perpetuate existing social inequities. 
Over the last few years, a plethora of works in fair classification have provided a principled toolkit to develop classifiers and quantify their performance under various fairness metrics. 
These metrics, including equal opportunity and equalized odds, give a natural way to ensure that favorable outcomes such as model performance or predicted positive rates are equalized across different groups for a given protected feature. 
More precisely, given a distribution $\nu$ on $\mc{X}\times\mc{A}\times \mc{Y}$ (where $\mc{X}$ is the feature space, $\mc{A}$ the protected attribute space and $\mc{Y}$ the label space), a hypothesis class $\mc{H}$, a fairness metric $m_{\rm fair}$, a measure of its violation $L^{m_{\rm fair}}_\nu(h)$, and a fairness violation tolerance $\alpha$; the goal in fair classification is to return $\arg\min_{h \in \mc{H}} \E_{(x,a,y)\sim \nu}[h(x)\neq y] \text{ subject to }L^{m_{\rm fair}}_\nu(h) \leq \alpha$. 
 
 In practice, as $\nu$ is unknown, solving an empirical analog of this constrained classification problem on a training set
is a natural approach to learning classifiers that generalize well to a test set, while maintaining fairness guarantees. Indeed, the focus of much of the fairness literature has been to develop optimization methods to solve such a problem \citep{agarwal2018reductions, donini2018empirical, cotter2018training}. While this is a reasonable approach when a large amount of labeled training data is available, in many applications such large amounts of data are not available, and it can be prohibitively expensive to collect more. In such settings existing approaches may not be able to guarantee accurate classifiers, or may return classifiers that are in fact unfair on the population distribution.

A promising approach to handle such low-data regimes and maximize the effectiveness of small amounts of labeled data is \emph{active learning}. Active learning methods aim to minimize the amount of labeled training data needed by only requesting labels for the most \emph{informative} examples, thereby significantly reducing the label complexity while ensuring similar accuracy of the learned classifier. While active learning methods have been applied to fair classification before, existing works either require large labeled datasets for pretraining, thereby eliminating the primary benefit of active learning, or are unable to satisfy the goal fairness constraint. 

In this work we aim to overcome these challenges and develop methods for fair active learning which do not require large pretraining datasets---truly operating in the low-data regime---and ensure fairness constraints are met. Our contributions are as follows:
\begin{enumerate}[leftmargin=*]
\item We propose a novel approach to fair active learning, \ours, 
which chooses which points to label by combining a posterior sampling-inspired randomized exploration procedure that aims to improve classifier accuracy, with a group-dependent sampling procedure to ensure fairness is met.
Notably, our approach does not require a large pretraining dataset, and is able to produce accurate and fair classifiers in the very low data regime. 
\item We evaluate our proposed method on a variety of standard benchmark datasets from the fairness community, and demonstrate that it yields large label complexity gains over passive approaches while ensuring fairness constraints are met, and also significantly outperforms the existing state-of-the-art approaches for fair active learning.
\end{enumerate}
To the best of our knowledge, our proposed approach is the first active learning procedure able to ensure fairness constraints are reliably met without requiring large amounts of labeled data.

\section{RELATED WORK}

\paragraph{Fairness.}
Algorithmic fairness has garnered significant interest in recent years (see \citet{barocas2017fairness, hort2022bia} for recent surveys). Approaches to mitigate fairness disparities can be grouped into three lines of work: pre-processing, in-processing, and post-processing. Pre-processing aims to remove disparate impact by modifying the training data\citep{kamiran2012data}, while post-processing modifies already learned classifiers to improve fairness \citep{hardt2016equality}. 
Of particular interest to our work is in-processing for bias mitigation, where the focus is on modifying the learning process to build fair classifiers \citep{zhang2018mitigating}. Most relevant to us within in-processing bias mitigation techniques are works that have approached fairness mitigations in classification as a constrained optimization problem \citep{agarwal2018reductions, donini2018empirical}. Our fairness metrics of interest---equal opportunity and equalized odds---were introduced as operationalizations of fairness concurrently by~\citet{hardt2016equality, kleinberg2016inherent}; see also \cite{kearns2018preventing}.
\paragraph{Active learning.}
The expense associated with labeling data has emerged as a significant obstacle in the practical implementation of machine learning methods. Motivated by this, there has been growing attention towards the concept of \emph{active} classification, which involves presenting the learner with a set of unlabeled examples, and tasking them with producing a precise hypothesis after querying as few labels as possible \citep{settles2011theories}.
Active learning has been studied extensively over the past five decades (see the survey \citet{hanneke2014theory}). Most active learning approaches select samples to label based on some notion of uncertainty (e.g., entropy of predictions, margin, disagreement \citep{cohn1994improving, beygelzimer2009importance}). 
Recent breakthroughs have connected best-arm identification for linear bandits with classification, opening up new possibilities for active learning via \emph{experiment design} \citep{katz2021improved, camilleri2022active, camilleri2021selective}. 

\paragraph{Fair active learning.}
The problem of fair active classification 
has been previously considered by recent efforts to reach a classifiers with good ``fairness-error'' trade-off given a label budget, including \citet{anahideh2021fair, sharaf2022promoting, fajri2022falcur}. As we will see experimentally, these works suffer from a variety of shortcomings: for example, poor generalization of their fairness violation, minimal accuracy gains over baseline methods, or limited ability to handle standard group fairness metrics. Furthermore, their objective is somewhat different than ours. While we aim to return a classifier with fairness violation below a desired tolerance (motivated by situations where it is critical to ensure our classifier satisfies a given fairness constraint), these works instead aim to quantify the general tradeoff between fairness and accuracy, without ensuring the returned classifier is below any tolerance. Last, these works assume the existence of large, pre-existing, labeled datasets: namely for their experiments on the \texttt{Adult income} dataset \citet{anahideh2021fair, sharaf2022promoting, fajri2022falcur} assume respectively that $2000, 15000, 3000$ labels are accessible. We will see that the gains from our active learning algorithms are instead visible after collecting $100$ labels.
Other works, such as \citet{cao2022fairness} focuses on fair active learning for decoupled models and \citet{shen2022metric, cao2022active}, have focused on the analogous problem of finding classifiers that meet \emph{metric}-fair constraints, while \citet{abernethy2020active, shekhar2021adaptive, cai2022adaptive, branchaudcharron2021active} have focused on data collection for \emph{min-max} fairness. The nature of min-max fairness 
does not explicitly constrain the differences in quantities between groups, instead improving the quantity for the worst-off group as much as possible. These, alongside the metric fairness constraints, are significantly different than the group fairness metrics we consider, and as such motivate an entirely different set of methods.

Another related line of work is that of bandits with constraints \citep{sui2015safe,kazerouni2017conservative,pacchiano2021stochastic,wang2022best,camilleri2022active}. As noted, classification can be modeled as a bandit problem and in some cases bandit algorithms can be applied to active learning for classification. Furthermore, imposing unknown constraints in bandit problems is similar to imposing fairness constraints in classification. To the best of our knowledge, however, existing work on constrained bandits does not consider constraints expressive enough to encode standard fairness metrics such as equalized odds and equal opportunity.\loose

\newcommand{\unif}{\mathrm{Unif}}
\newcommand{\cDmy}{\mathcal{D}_{\mathrm{tr}}^{\backslash y}}
\newcommand{\cD}{\mathcal{D}}

\section{PRELIMINARIES}\label{sec:FAL}

In this work, we focus on a binary classification scenario where each data point consists of three elements $(x, a, y)$. Here, $x \in \X \subset \R^d$ represents a $d$-dimensional feature vector, $a \in \{0, 1\}$ indicates a binary protected attribute which partitions our data into two \emph{groups}, and $y \in \{0, 1\}$ denotes a label. 
In the general classification paradigm, we assume that the training set $\mc{D} = \{(x_1, a_1, y_1), \ldots,(x_n, a_n, y_n)\} \sim \nu \in \triangle_{\X\times \{0, 1\}\times \{0, 1\}}$ is a set of $n$ examples sampled from a target distribution $\nu$.
The objective is to learn from the training set $\mc{D}$ a classifier $h: \X \mapsto \{0, 1\}$ among a hypothesis set $\H$ (e.g. linear classifiers or random forests) which has the lowest risk $R_{\nu}(h)$ possible on the target distribution. Here the risk is defined for any distribution $\nu\in \triangle_{\X\times \{0, 1\}\times \{0, 1\}}$ as $R_\nu(h) := \E_{(x, a, y) \sim \nu}[\1\{h(x)\neq y\}]$.
\vspace{-.2cm}

\subsection{Definitions of Fairness}
In this work we consider in particular two well-known definitions of fairness:
Equal Opportunity---also called True Positive Rate Parity (TPRP)---and Equalized Odds (EO), though our method extends to other notions of fairness as well. We formally define these here. 
\iftoggle{arxiv}{}{\vspace{-.7cm}}


\begin{definition}[Fairness Definitions (EO, TPRP)]
     Given a tolerance $\alpha \in [0, 1]$ and target distribution $\nu$, a classifier $h \in \H$ is said to satisfy True Positive Rate Parity up to $\alpha$ on $\nu$ if 
    \iftoggle{arxiv}{
    \begin{align}\label{eq:tprp}
        &| P_{(x, a, y)\sim\nu}(h(x)=1|a=0, y=1) - P_{(x, a, y)\sim\nu}(h(x)=1|a=1, y=1) | \leq \alpha.
    \end{align}}{
    \begin{align}\label{eq:tprp}
        &| P_{(x, a, y)\sim\nu}(h(x)=1|a=0, y=1) \nonumber\\
        &\qquad- P_{(x, a, y)\sim\nu}(h(x)=1|a=1, y=1) | \leq \alpha.
    \end{align}
    }
    A classifier satisfies Equalized Odds up to $\alpha$ on a distribution $\nu$ if, in addition to satisfying \eqref{eq:tprp} it also satisfies
    \iftoggle{arxiv}{
    \begin{align}\label{eq:fprp}
        &| P_{(x, a, y)\sim\nu}(h(x)=1|a=0, y=0) - P_{(x, a, y)\sim\nu}(h(x)=1|a=1, y=0) | \leq \alpha.
    \end{align}
    }{
    \begin{align}\label{eq:fprp}
        &| P_{(x, a, y)\sim\nu}(h(x)=1|a=0, y=0) \nonumber\\
        &\qquad- P_{(x, a, y)\sim\nu}(h(x)=1|a=1, y=0) | \leq \alpha.
    \end{align}
    }
    %
\end{definition}

If $\alpha = 0$, EO states that the prediction $h(x)$ is conditionally independent of the protected attribute $a$ given the label $y$. 
With these definitions of fairness in mind, we also define the fairness violation of a given classifier as the left-hand sides of equations \eqref{eq:tprp} and \eqref{eq:fprp}.
\vspace{-.1cm}

\begin{definition}[Fairness violation]
We define the EO (resp. TPRP) violation of classifier $h$ on distribution $\nu$ as  
\iftoggle{arxiv}{
\begin{align*}
    L^{\rm EO}_\nu(h) & := \max_{z\in\{0, 1\}}| P_{(x, a, y)\sim\nu}(h(x)=1|a=0, y=z) - P_{(x, a, y)\sim\nu}(h(x)=1|a=1, y=z) |, \\
    L^{\rm TP}_\nu(h) & := | P_{(x, a, y)\sim\nu}(h(x)=1|a=0, y=1) 
    - P_{(x, a, y)\sim\nu}(h(x)=1|a=1, y=1) |.
\end{align*}
}{
\begin{align*}
    L^{\rm EO}_\nu(h) & := \max_{z\in\{0, 1\}}| P_{(x, a, y)\sim\nu}(h(x)=1|a=0, y=z) \nonumber\\ &\qquad - P_{(x, a, y)\sim\nu}(h(x)=1|a=1, y=z) |, \\
    L^{\rm TP}_\nu(h) & := | P_{(x, a, y)\sim\nu}(h(x)=1|a=0, y=1) 
    \nonumber\\ &\qquad - P_{(x, a, y)\sim\nu}(h(x)=1|a=1, y=1) |.
\end{align*}
}
\end{definition}
Given some threshold $\alpha$, a \emph{fair classifier} is a classifier with fairness violation below $\alpha$.\loose

\newcommand{\cDtest}{\mathcal{D}_{\mathrm{test}}}
\subsection{Problem Statement}
Classical machine learning typically deals with the setting where the learner has access to a fixed, labeled dataset, $\mc{D}_{\mathrm{tr}}$, and must learn as accurate a classifier as possible from this data. In this work, we are interested in the \emph{active} setting where the goal of the learner is to train on as few labeled data points as possible to obtain a desired accuracy.
In particular, in the pool-based active learning setting, the task of fair active classification is the following sequential problem. 
First, the learner is given an unlabeled training pool of data $\cDmy \subseteq \X \times \mathcal{A}$ and some fairness metric $m_{\rm{fair}} \in \{EO, TP\}$ with target fairness violation $\alpha$. At each time $t=1,2, \ldots, T$ the agent then chooses any unlabeled point from the pool $(x_t,a_t) \in \cDmy$ and requests its label $y_t\in\{0, 1\}$. 
After requesting $T$ labels, the agent outputs a classifier $h\in \H$. Its performance is evaluated via the two following metrics: error loss $R_{\nu}(h)$ and fairness violation $L^{m_{\rm{fair}}}_{\nu}(h)$, for $\nu$ the population distribution. 
Note that we assume that the learner may see the true protected attribute before querying the label for a point---see \cite{awasthi2020equalized} for a discussion of the case when the protected attribute is noisy.\loose

\section{FAIR ACTIVE LEARNING}
In this section, we present our approach to fair active classification, \ours.

\subsection{Fair Learning with Fixed Datasets}\label{sec:passive}
Before considering the active setting, we first consider the question of finding a fair classifier on a fixed dataset.
As the general classification paradigm (i.e. classification without fairness constraints) is known to potentially cause disparities when applied to sensitive tasks \citep{barocas2016big}, significant effort has been invested to develop effective algorithms that balance the goal of classification (learn the most accurate classifier) with fairness (learn a classifier with low fairness violation) on static datasets. 
Given a target distribution $\nu$, a fairness metric denoted $m_{\rm fair}\! \in \!\{EO, TP\}$ and a fairness violation tolerance $\alpha \in [0, 1]$, this fair classification problem can be stated as the following:\loose
\begin{equation}
\begin{aligned}\label{eq:opti}
& \underset{h\in\H}{\text{minimize}}
& & R_{\nu}(h) 
\quad
\text{subject to}
& & L^{m_{\rm{fair}}}_{\nu}(h) \leq \alpha.
\end{aligned}
\end{equation}
In practice, one cannot solve \eqref{eq:opti} directly, as the population, $\nu$, which $R_{\nu}(h)$ and $L_\nu^{m_{\rm{fair}}}(h)$ depend on, is unknown.
Instead, we consider empirical estimates of the risk and fairness constraint. As is standard throughout machine learning, we rely on the plug-in estimate of the empirical risk, $\widehat{R}_{\mc{D}}(h) := \frac{1}{n}\sum_{i=1}^n\1\{h(x_i)\neq y_i\}$.
Similarly, throughout the fairness literature, a plug-in estimator is typically also used to estimate the fairness violation \citep{agarwal2018reductions, donini2018empirical, cotter2018training}.
As an example, consider the case of estimating TPRP. Let $\mc{D} = \{(x_1, a_1, y_1), \ldots,(x_n, a_n, y_n)\}$ denote a set of data and recall that the True Positive Rate (TPR) of each group $z\in\{0, 1\}$ can be written as \loose
\iftoggle{arxiv}{
\begin{align}\label{eq:tpr_def}
&P_{(x, a, y)\sim\nu}(h(x)=1|a=z, y=1) = \frac{\E_{(x, a, y)\sim\nu}[\1\{h(x)=1,y=1,a=z\}]}{\E_{(x, a, y)\sim\nu}[\1\{y=1,a=z\}]}.
\end{align}}{
\begin{align}\label{eq:tpr_def}
&P_{(x, a, y)\sim\nu}(h(x)=1|a=z, y=1)\nonumber \\
&\qquad= \frac{\E_{(x, a, y)\sim\nu}[\1\{h(x)=1,y=1,a=z\}]}{\E_{(x, a, y)\sim\nu}[\1\{y=1,a=z\}]}.
\end{align}}

A natural approach to empirically estimate the TPRP is then to simply replace the population quantities with the empirical quantities in \eqref{eq:tpr_def} to estimate the TPR for each group, and then compute the absolute value of the difference of these TPRs.
This yields the following empirical estimate of the TPRP violation of a classifier $h$ on the data $\mc{D}$:
\iftoggle{arxiv}{
\begin{align}\label{eq:tp_est}
\begin{split}
    \widehat{L}^{\rm TP}_\mc{D}(h) &:= \Bigg | \sum_{i=1}^{n}\frac{\1\{h(x_i)=1,y_i=1,a_i=1\}}{\sum_{i=1}^{n}\1\{y_i=1,a_i=1\}}  - \sum_{i=1}^{n}\frac{\1\{h(x_i)=1,y_i=1,a_i=0\}}{\sum_{i=1}^{n}\1\{y_i=1,a_i=0\}} \Bigg |.
\end{split}
\end{align}}{
\begin{align}\label{eq:tp_est}
\begin{split}
    \widehat{L}^{\rm TP}_\mc{D}(h) &:= \Bigg | \sum_{i=1}^{n}\frac{\1\{h(x_i)=1,y_i=1,a_i=1\}}{\sum_{i=1}^{n}\1\{y_i=1,a_i=1\}} \\
    &\qquad - \sum_{i=1}^{n}\frac{\1\{h(x_i)=1,y_i=1,a_i=0\}}{\sum_{i=1}^{n}\1\{y_i=1,a_i=0\}} \Bigg |.
\end{split}
\end{align}
}
We can estimate the \emph{false-positive rate parity} (FPRP), $\widehat{L}^{\rm FP}_\mc{D}(h)$, analogously to \eqref{eq:tp_est} but with $y_i = 1$ replaced by $y_i = 0$, and estimate the EO violation as the maximum of the empirical estimate of the TPRP violation and the empirical estimate of the FPRP violation, $\widehat{L}^{\rm EO}_\mc{D}(h) = \max\{\widehat{L}^{\rm TP}_\mc{D}(h), \widehat{L}^{\rm FP}_\mc{D}(h)\}$.

\paragraph{Empirical fair classification.}
Equipped with these empirical estimates, we return to the fair classification problem, \eqref{eq:opti}. 
Given a training set $\mc{D} = \{(x_1, a_1, y_1), \ldots,(x_n, a_n, y_n)\} \sim \nu$ sampled from a distribution $\nu \in \triangle_{\X\times \{0, 1\}\times \{0, 1\}}$, a fairness metric denoted $m_{\rm fair} \in \{EO, TP\}$ and fairness tolerance $\alpha \in [0, 1]$, one can use the empirical estimates of the risk and the fairness violation to approximate \eqref{eq:opti} with the following \emph{empirical} fair classification optimization problem: \loose
\begin{equation}
\begin{aligned}\label{eq:empirical_opti}
\textstyle & \underset{h\in\H}{\text{minimize}}
& & \widehat{R}_\mc{D}(h)
\quad
\text{subject to}
& & \widehat{L}^{m_{\rm{fair}}}_\mc{D}(h) \leq \alpha.
\end{aligned}
\end{equation}
Note that solving such a problem is a common approach to fair classification, and can be solved efficiently \citep{donini2018empirical, agarwal2018reductions}. This optimization problem will form the starting-point of our proposed approach, and our algorithms will assume access to a solver for it,  which we call the empirical fair oracle---\texttt{EFO}. In our experiments we take an approach analogous to \citet{agarwal2018reductions} to solve \eqref{eq:empirical_opti}. 

\subsection{Estimation Error and Sampling Bias}
In this section we address two additional issues that arise in ensuring our returned classifier is fair. First, estimation error in the fairness constraint, and second, bias introduced by sampling data points in a non-uniform fashion.

\paragraph{Conservative fairness estimates.}
Since $\widehat{L}^{m_{\rm{fair}}}_\mc{D}(h)$ is only an empirical estimate of $L^{m_{\rm{fair}}}_{\nu}(h)$, ensuring that $\widehat{L}^{m_{\rm{fair}}}_\mc{D}(h) \le \alpha$ does not guarantee that $L^{m_{\rm{fair}}}_{\nu}(h) \le \alpha$, our end goal. 
The following result gives a precise quantification of the deviation between $\widehat{L}^{m_{\rm{fair}}}_\mc{D}(h)$ and $L^{m_{\rm{fair}}}_{\nu}(h)$ in the case where $m_{\rm{fair}} = EO$.

\begin{proposition}\label{cor:fairness_est_simple}
    Let the train set be  $\mc{D} = \{(x_1, a_1, y_1), \ldots,(x_n, a_n, y_n)\} \sim \nu$. Then it holds with probability $1-\delta$ that, with $c_\delta := 8 \sqrt{2\log(2/\delta)}$:
\iftoggle{arxiv}{
    \begin{align*}
        & |L^{\rm EO}_\nu(h) - \widehat{L}^{\rm EO}_\mc{D}(h)|  \le \frac{c_\delta}{\sqrt{n}}  \cdot \max_{0\leq j, k \leq 1}\frac{1}{\frac{1}{n}\sum_{i=1}^n \1\{y_i = k, a_i = j \}}+\mc{O}\left(\frac{1}{n}\right).
    \end{align*}
}{
    \begin{align*}
        & |L^{\rm EO}_\nu(h) - \widehat{L}^{\rm EO}_\mc{D}(h)| \\
        & \quad \le \frac{c_\delta}{\sqrt{n}}  \cdot \max_{0\leq j, k \leq 1}\frac{1}{\frac{1}{n}\sum_{i=1}^n \1\{y_i = k, a_i = j \}}+\mc{O}\left(\frac{1}{n}\right).
    \end{align*}
    }
\end{proposition}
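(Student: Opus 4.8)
The plan is to reduce the deviation of the max-structured EO violation to the deviations of the four group-conditional positive rates, and then to control each of those via a ratio-concentration argument. Write $p_{j,k} := P_{(x,a,y)\sim\nu}(h(x)=1 \mid a=j, y=k)$ and let $\widehat p_{j,k}$ denote its empirical counterpart, so that $L^{\rm EO}_\nu(h) = \max_{k} |p_{0,k}-p_{1,k}|$ and $\widehat L^{\rm EO}_\mc{D}(h) = \max_k |\widehat p_{0,k}-\widehat p_{1,k}|$. First I would use that $z\mapsto\max_k z_k$ is $1$-Lipschitz, together with the reverse triangle inequality, to obtain
\[
|L^{\rm EO}_\nu(h) - \widehat L^{\rm EO}_\mc{D}(h)| \le \max_{k\in\{0,1\}}\left(|p_{0,k}-\widehat p_{0,k}| + |p_{1,k}-\widehat p_{1,k}|\right),
\]
so that it suffices to bound each $|p_{j,k}-\widehat p_{j,k}|$ for $j,k\in\{0,1\}$.

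For the per-term bound I would express each conditional probability as a ratio of joint probabilities, $p_{j,k} = q_{j,k}/r_{j,k}$ with $q_{j,k} := P_\nu(h(x)=1,a=j,y=k)$ and $r_{j,k} := P_\nu(a=j,y=k)$, and likewise $\widehat p_{j,k} = \widehat q_{j,k}/\widehat r_{j,k}$ for the corresponding empirical averages. The key algebraic step is the ratio decomposition
\[
\frac{\widehat q_{j,k}}{\widehat r_{j,k}} - \frac{q_{j,k}}{r_{j,k}} = \frac{\widehat q_{j,k}-q_{j,k}}{\widehat r_{j,k}} - \frac{q_{j,k}}{r_{j,k}}\cdot\frac{\widehat r_{j,k}-r_{j,k}}{\widehat r_{j,k}},
\]
which, using $q_{j,k}\le r_{j,k}$ so that $q_{j,k}/r_{j,k}\le 1$, yields
\[
|p_{j,k}-\widehat p_{j,k}| \le \frac{|\widehat q_{j,k}-q_{j,k}| + |\widehat r_{j,k}-r_{j,k}|}{\widehat r_{j,k}}.
\]
Crucially, the denominator here is exactly the empirical quantity $\widehat r_{j,k} = \tfrac1n\sum_i \1\{a_i=j,y_i=k\}$ appearing in the statement.

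Since each of the eight empirical averages $\widehat q_{j,k},\widehat r_{j,k}$ is a mean of i.i.d.\ $[0,1]$-valued indicators, I would apply Hoeffding's inequality to each and take a union bound over all eight, so that with probability $1-\delta$ every deviation is at most $\sqrt{\log(16/\delta)/(2n)}$. Substituting into the per-term bound gives $|p_{j,k}-\widehat p_{j,k}| \lesssim \tfrac{1}{\sqrt n}\cdot\tfrac{1}{\widehat r_{j,k}}$, and summing over the two groups and maximizing over $k$ produces the claimed form $\tfrac{c_\delta}{\sqrt n}\max_{j,k} 1/\widehat r_{j,k}$; the stated $c_\delta = 8\sqrt{2\log(2/\delta)}$ is a deliberately loose constant that dominates the sharper $2\sqrt{2\log(16/\delta)}$ coming out of this bookkeeping and absorbs the union-bound factor.

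The hard part will be handling the randomness in the denominator: naively concentrating numerator and denominator separately and then dividing does not immediately give a clean bound, since $\widehat r_{j,k}$ can be small. The decomposition above circumvents this by keeping the empirical denominator intact, and in fact it already yields the main $\tfrac{1}{\sqrt n}$ term with the empirical $1/\widehat r_{j,k}$ and no explicit lower-order correction. The $\mc{O}(1/n)$ in the statement arises only if one instead routes through the population denominator via a first-order (delta-method) expansion around $1/r_{j,k}$: there the second-order remainder and the cost of converting $1/r_{j,k}$ to $1/\widehat r_{j,k}$ each contribute $\mc{O}(1/n)$, and since these are nonnegative the recorded bound holds either way. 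A minor additional point is to note that the bound is vacuous, hence trivially valid, on the low-probability event that some $\widehat r_{j,k}=0$.
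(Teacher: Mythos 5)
Your proposal is correct, and it reaches the stated bound by a genuinely different technical route than the paper, even though the high-level blueprint is the same (reduce the EO deviation to the four group-conditional rates, write each rate as a ratio of empirical joint frequencies, concentrate numerator and denominator, and keep the \emph{empirical} denominator $\widehat r_{j,k}$ in the final bound). The paper first proves a variance-adaptive intermediate theorem using empirical Bernstein inequalities, and handles the ratio by splitting $|\widehat{\mathrm{num}}/\widehat{\mathrm{den}} - \mathrm{num}/\mathrm{den}|$ via the triangle inequality together with the bound $\bigl|1/\widehat{\mathrm{den}} - 1/\mathrm{den}\bigr| \le \alpha^{(\mathrm{den})}/\widehat{\mathrm{den}}^2$, which is only invoked under the proviso $\alpha^{(\mathrm{den})} \le \widehat{\mathrm{den}}/2$; the simplification of the resulting confidence terms $C_{j,k}$ is what produces the $\mc{O}(1/n)$ remainder and the constant $8\sqrt{2\log(2/\delta)}$. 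You instead use plain Hoeffding with an explicit union bound over the eight indicator averages, and your exact identity
\[
\frac{\widehat q_{j,k}}{\widehat r_{j,k}} - \frac{q_{j,k}}{r_{j,k}} = \frac{\widehat q_{j,k}-q_{j,k}}{\widehat r_{j,k}} - \frac{q_{j,k}}{r_{j,k}}\cdot\frac{\widehat r_{j,k}-r_{j,k}}{\widehat r_{j,k}}
\]
combined with $q_{j,k}/r_{j,k}\le 1$ requires no condition on the size of $\widehat r_{j,k}$, yields the $1/\sqrt{n}$ term directly with \emph{no} lower-order remainder, and gives the sharper constant $2\sqrt{2\log(16/\delta)} \le 8\sqrt{2\log(2/\delta)}$; your Lipschitz-of-max reduction is also slightly tighter than the paper's ``max $\le$ sum'' step, and you account for the union bound explicitly, which the paper elides. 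What the paper's longer route buys in exchange is the intermediate Theorem A.1: an empirical-variance-dependent bound that is tighter than Hoeffding when the indicator variances $\widehat{\V}^{(1)}_{j,k}, \widehat{\V}^{(2)}_{j,k}$ are small (e.g., rare groups or rare positive predictions), and which may be of independent interest; your argument is the more elementary and self-contained proof of the proposition as stated.
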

Analogous results hold for TPRP. This bound inspires two important aspects of our approach. First, to ensure fairness is met, it suggests setting the tolerance in \eqref{eq:empirical_opti} to a conservative value less than $\alpha$, in particular subtracting a $\mathcal{O}(\frac{1}{\sqrt{n}})$ term off of $\alpha$. Adjusting $\alpha$ by this margin has been demonstrated in the past to produce fair classifiers \citep{woodworth2017learning,thomas2019preventing}, and we show in \Cref{fig:correction_ablation} that it is also critical in our active setting. Second, \Cref{cor:fairness_est_simple} suggests that in order to estimate the fairness, we need to collect samples for \emph{each protected attribute}, since our estimation error scales inversely with the minimum number of samples collected for either protected attribute. This observation is critical in motivating our active sampling procedure, as we outline in the following section.


\paragraph{Sampling bias correction.} 
In the active learning paradigm, at every step the learner samples a data point $(x_t,a_t) \in \cDmy$ from some (chosen) distribution, $\nu_t^{\rm tr} \in \triangle_{\cDmy}$, $(x_t,a_t) \sim \nu_t^{\rm tr}$. For example, the learner may place higher weight on points that are \emph{informative}, increasing the number of samples from around the decision boundary. While this will ultimately improve the learner's ability to classify, the distribution of the sampled dataset no longer matches that of the original training dataset. This will result in the plug-in estimator for the fairness constraint, for example \eqref{eq:tp_est}, to be biased. We correct for this mismatch using importance weights.
For the risk, we recall the definition of the well-known IPS estimator (empirical risk re-weighted with importance weights): $\widehat{R}_{\mc{D}, \nu^{\rm tr}, \nu}(h) := \frac{1}{n}\sum_{i=1}^n\frac{\nu_i}{\nu^{\rm tr}_i}\1\{h(x_i)\neq y_i\}$, for $(x_i,a_i) \sim \nu^{\mathrm{tr}}$ and $y_i$ and associated label, and $\nu_i$ the population weight of point $i$\footnote{In general this is unknown but, assuming $\cDmy \sim \nu$, it suffices to simply set $\nu_i = 1/|\cDmy|$} and $\nu^{\mathrm{tr}}_i$ the probability $\nu^{\mathrm{tr}}$ samples point $i$. 
It is straightforward to see that this is an unbiased estimator of the true risk.
We define the estimator for EO with importance weights next.


\begin{definition}[Empirical EO violation with importance weights]\label{def:ips_fairness}
Consider a dataset drawn i.i.d from $\nu^{\rm tr}$,  $\mc{D} := \{(x_1, a_1, y_1), \ldots,(x_n, a_n, y_n)\} \sim \nu^{\rm tr}$. The empirical estimate of the EO violation of a classifier $h$ on the target distribution $\nu$ can be evaluated as 
\vspace{-.1cm}
\iftoggle{arxiv}{
\begin{align*}
    \!\widehat{L}^{\rm EO}_{\mc{D}, \nu^{\rm tr}, \nu}\!(h)\! &:=\!\!\! \!\max_{z\in\{0, 1\}}\!\Bigg| \frac{\sum_{i=1}^{n}\!\frac{\nu_i}{\nu^{\rm tr}_i}\!\1\{h(x_i)\!=\!1,y_i\!=\!z,a_i\!=\!1\!\}}{\sum_{i=1}^{n}\frac{\nu_i}{\nu^{\rm tr}_i}\1\{y_i=z,a_i=1\}} -\!\frac{\sum_{i=1}^{n}\!\frac{\nu_i}{\nu^{\rm tr}_i}\!\1\{h(x_i)\!=\!1,y_i\!=\!z,a_i\!=\!0\}}{\sum_{i=1}^{n}\frac{\nu_i}{\nu^{\rm tr}_i}\1\{y_i=z,a_i=0\}}\Bigg|.
\end{align*}
}{
\begin{align*}
    \!\widehat{L}^{\rm EO}_{\mc{D}, \nu^{\rm tr}, \nu}\!(h)\! &:=\!\!\! \!\max_{z\in\{0, 1\}}\!\Bigg| \frac{\sum_{i=1}^{n}\!\frac{\nu_i}{\nu^{\rm tr}_i}\!\1\{h(x_i)\!=\!1,y_i\!=\!z,a_i\!=\!1\!\}}{\sum_{i=1}^{n}\frac{\nu_i}{\nu^{\rm tr}_i}\1\{y_i=z,a_i=1\}}
    \\
    &\qquad\!-\!\frac{\sum_{i=1}^{n}\!\frac{\nu_i}{\nu^{\rm tr}_i}\!\1\{h(x_i)\!=\!1,y_i\!=\!z,a_i\!=\!0\}}{\sum_{i=1}^{n}\frac{\nu_i}{\nu^{\rm tr}_i}\1\{y_i=z,a_i=0\}}\Bigg|.
\end{align*}
}
\end{definition}
We define the importance-weighted TPRP violation analogously, but only for $z = 1$. While this estimate is not truly unbiased, both the numerator and denominators are unbiased, leading to accurate estimates of the fairness.
In the following, when applying our fairness oracle \texttt{EFO} in the active setting, we assume it is applied on the importance-weighted fairness and loss estimates.

\begin{figure}
    \hspace{0.4in}
    \begin{subfigure}{0.5\textwidth}
    \includegraphics[width=\textwidth]{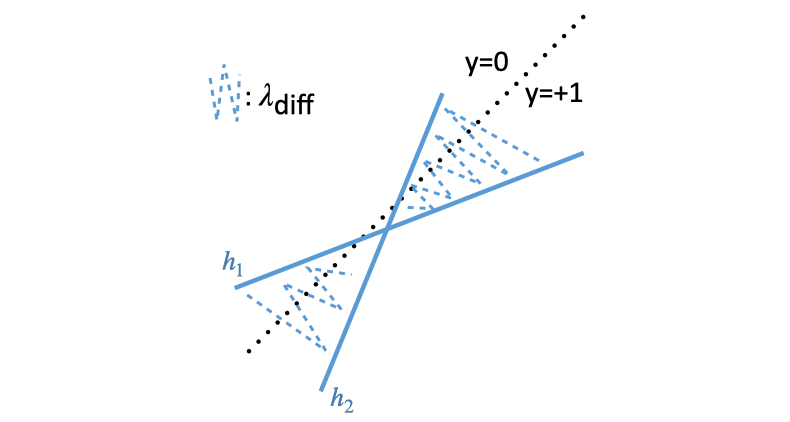}
    \caption{$\lamdiff$}
    \label{fig:2d_lambdaxy_xy}
    \end{subfigure}
    \begin{subfigure}{0.25\textwidth}
    \includegraphics[width=\textwidth]{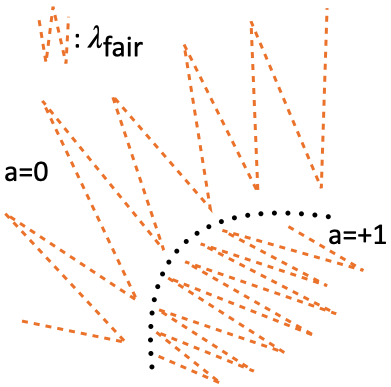}
    \caption{$\lamfair$}
    \label{fig:2d_lambdaxy_g}
    \end{subfigure}
    \caption{Sampling distributions of \ours when $k = 2$. $\lamdiff$ places mass on disagreement region of learned classifiers in order to collect points increasing accuracy. $\lamfair$ places equal amounts of mass on each group in order to learn fairness value.}
    \label{fig:2d_lambdaxy}
\end{figure}

\subsection{Fair Active Learning}
We now provide our algorithm for fair active classification, Algorithm~\ref{alg:active_cc}.
Algorithm~\ref{alg:active_cc} proceeds in rounds.
In each round, we choose data points to label by sampling from two distributions: $\lamdiff$, which focuses on improving the \emph{accuracy}, and $\lamfair$, which focuses on improving the \emph{fairness estimates}. We describe our choice of each of these distributions below.

\begin{algorithm}[h!]
\caption{\ours (Fair Active Randomized Exploration)} 
\label{alg:active_cc}
\begin{algorithmic}[1]
\Require{Batch size $n$, number of rounds $L$, classifiers per round $k$, perturbation rate $\sigma$, fairness metric $m_{\rm{fair}}$, fairness tolerance $\alpha$, unlabeled data $\cDmy$}
\State Sample $(x^{(0)}_1, a^{(0)}_1), \ldots, (x^{(0)}_n, a^{(0)}_n) \sim \unif(\cDmy)$, request labels for sampled points
\State $\cD_0 \leftarrow \{ (x^{(0)}_i, a^{(0)}_i, y^{(0)}_i) \}_{i=1}^n$ 
\State $\cDmy \leftarrow \cDmy \backslash \{ (x^{(0)}_i, a^{(0)}_i) \}_{i=1}^n $
\For{$\ell = 1, \ldots, L-1$}
\Statex {\color{blue} \texttt{// Compute $\lamdiff$}}
\For{$i = 1, \ldots, k$}
\State $h_i = \texttt{EFO}(\widetilde{\cD}_{\ell-1}, \alpha - \frac{1}{\sqrt{n\cdot\ell}})$ where $\widetilde{\cD}_{\ell-1}$ generated by flipping each label of $\cD_{\ell-1}$ w.p. $\sigma$
\EndFor
\State{Compute $\lamdiff$ allocation:
$$\lamdiff \leftarrow \argmin_{\lambda\in\triangle_{\cDmy}}\max_{1\leq i\neq j\leq k}\sum_{(x,a)\in \cDmy}\frac{\1\{h_i(x)\neq h_j(x)\}}{\lambda_x}$$}
\Statex {\color{blue} \texttt{// Compute $\lamfair$}}
\iftoggle{arxiv}{
\State $\lamfair \leftarrow \frac{1}{2} \unif(\{ (x,a) \in \cDmy \ : \ a = 0\}) + \frac{1}{2} \unif(\{ (x,a) \in \cDmy \ : \ a = 1\})$
}{
\State $\lamfair \leftarrow \frac{1}{2} \unif(\{ (x,a) \in \cDmy \ : \ a = 0\})$ 
\Statex $\qquad \qquad \qquad + \frac{1}{2} \unif(\{ (x,a) \in \cDmy \ : \ a = 1\})$}
\Statex {\color{blue} \texttt{// Sample points and update classifier}}
\State{Sample $(x^{(\ell)}_i,a^{(\ell)}_i) \sim \frac{1}{2} \lambda_{\text{diff}}+ \frac{1}{2} \lambda_{\text{fair}}$, $i=1,\ldots,n$}
\State{Observe corresponding labels $y^{(\ell)}_1, \ldots, y^{(\ell)}_n$}
\State{$\cD^{\ell} \leftarrow \cD^{\ell - 1} \cup \{ (x^{(\ell)}_i,a^{(\ell)}_i,y_i^{(\ell)}) \}_{i=1}^n$}
\State $\cDmy \leftarrow \cDmy \backslash \{ (x^{(\ell)}_i,a^{(\ell)}_i) \}_{i=1}^n$
\EndFor
\State{\textbf{Return} $\widehat{h} = \texttt{EFO}(\cD^{L}, \alpha - \frac{1}{\sqrt{n\cdot L}})$}
\end{algorithmic} 
\end{algorithm}


\paragraph{Improving accuracy via randomized exploration.}
In each round of \Cref{alg:active_cc}, to determine which points are most likely to improve accuracy, we perform \textit{randomized exploration} by training a set of $k$ fair classifiers $\widehat{h}_i, i\in [k]$, on perturbations of the training data already collected. In particular, to generate these perturbations, while training each classifier $\widehat{h}_i$ we flip the label of each data point with probability $\sigma$.
Given these classifiers, we compute $\lambda_{\text{diff}}$, which aims to sample unlabeled training points that effectively distinguish between the $k$ classifiers.


As described in a variety of works \citep{osband2016generalization,osband2018randomized,russo2019worst,osband2019deep,kveton2019randomized, camilleri2022active}, randomized exploration emulates sampling from a posterior distribution over the optimal classifier. The sampling distribution $\lambda_{\text{diff}}$ is such that the weights will be large for the points $x$ about which the $k$ classifiers disagree most. Indeed, taking $k=2$ for illustration, we have $\lambda_{\text{diff}} = \arg\min_{\lambda\in\triangle_\X}\sum_{x\in \X}\frac{\1\{h_1(x)\neq h_2(x)\}}{\lambda_x}$. If $h_1(x) = h_2(x)$ then $\frac{\1\{h_1(x)\neq h_2(x)\}}{\lambda_x} = 0$ for any $\lambda_x > 0$. In order to minimize $\sum_{x\in \X}\frac{\1\{h_1(x)\neq h_2(x)\}}{\lambda_x}$, one can set $\lambda_x$ to be very small at regions of $\X$ where $h_1=h_2$ and very large at regions of $\X$ where $h_1\neq h_2$. See \Cref{fig:2d_lambdaxy_xy} for an illustration of this.
Given this, if we can ensure $\widehat{h}_i, i\in [k]$ disagree on points close to the true decision boundary, then our sampling procedure will ensure that we sample such points, which will enable us to effectively learn an accurate classifier. 
With this in mind, we hope to create $k$ classifiers that have a decision boundary close to the true decision boundary, yet this is precisely what will be created by posterior sampling, which our procedure mimics. As we will see in the experiments, this sampling strategy effectively collects labels that are informative, increasing accuracy of the learned classifier.

\paragraph{Improving fairness via attribute-dependent exploration.}
In addition to learning the decision boundary  to obtain a classifier with high accuracy, we must also learn the value of the fairness constraint to ensure our final classifier is fair. While $\lamdiff$ ensures that we sample points close to the decision boundary, it makes no guarantee that we sample points which allow us to accurately estimate our fairness constraint---our choice of $\lamfair$ ensures that we do sample enough to accurately estimate the fairness.


\iftoggle{arxiv}{
\begin{wraptable}{r}{0.5\textwidth}
\begin{center}
\begin{tabular}{ |m{13em}|m{4.1em}|m{2.75em}| } 
\hline
Dataset & Protected Attribute & Dataset Size \\
\hline\hline
\texttt{Drug Consumption} \citep{fehrman2017factor} & Gender & 1885 \\
\hline
\texttt{Bank} \citep{moro2014data} & Education Level & 11,162 \\
\hline
\texttt{German Credit} \citep{hofmann1994statlog} & Gender & 1,000 \\
\hline
\texttt{Adult Income} \citep{lichman2013UCI} & Gender & 48,842 \\
\hline
\texttt{Compas} \citep{lichman2013UCI} & Gender & 5,278 \\
\hline
\texttt{Community and Crime} \citep{redmond2002data} & Race & 1,902 \\
\hline
\end{tabular}
\end{center}
\caption{Benchmark datasets}
\label{tab:datasets}
\end{wraptable}
}{}
 As shown in \Cref{cor:fairness_est_simple}, if we wish to estimate the fairness value of a given classifier, we must ensure that we have collected sufficiently many data points from each group $j \in \{0,1\}$. $\lamdiff$ is not guaranteed to sample such points---for example, if we have severe group imbalance, the overall accuracy may be maximized by ignoring the group with many fewer samples, in which case $\lamdiff$ will focus on only sampling the larger group. To address this, we choose $\lamfair$ to sample an equal number of samples from each group, which will ensure that our fairness estimate will converge to the population fairness, as guaranteed by \Cref{cor:fairness_est_simple}. See \Cref{fig:2d_lambdaxy_g} for an illustration of this.
 As we demonstrate in \Cref{sec:ablation}, this sampling is absolutely critical if our goal is to learn a fair classifier---without this attribute-dependent sampling, naive active learning methods fail to produce fair classifiers.

\iftoggle{arxiv}{
}{
\begin{table}
\begin{center}
\begin{tabular}{ |m{13em}|m{4.1em}|m{2.75em}| } 
\hline
Dataset & Protected Attribute & Dataset Size \\
\hline\hline
\texttt{Drug Consumption} \citep{fehrman2017factor} & Gender & 1885 \\
\hline
\texttt{Bank} \citep{moro2014data} & Education Level & 11,162 \\
\hline
\texttt{German Credit} \citep{hofmann1994statlog} & Gender & 1,000 \\
\hline
\texttt{Adult Income} \citep{lichman2013UCI} & Gender & 48,842 \\
\hline
\texttt{Compas} \citep{lichman2013UCI} & Gender & 5,278 \\
\hline
\texttt{Community and Crime} \citep{redmond2002data} & Race & 1,902 \\
\hline
\end{tabular}
\end{center}
\caption{Benchmark datasets}
\label{tab:datasets}
\end{table}
}
\section{EXPERIMENTS}
Finally, we demonstrate the effectiveness of \ours experimentally on standard fairness datasets.


\paragraph{Implementation details.}
For all experiments, we use logistic regression classifiers without regularization and partition the dataset into a $75\%/25\%$ train$/$test split. We ran a grid-search over the hyperparameters of \ours to set $\sigma=0.1$ and $k=10$. We set the fairness tolerance to $\alpha - 1/\sqrt{n}$ to account for estimation error in the fairness constraint.
All experiments were run on a Intel Xeon 6226R CPU with 64 cores.


\vspace{-.3cm}

\paragraph{Datasets.} 
In our experiments, we consider six datasets commonly used in the fairness literature, listed in \Cref{tab:datasets}. To ensure consistency, we standardized the data to have a mean of zero and a variance of one.


\newcommand{\FAL}{\textsc{Fal}\xspace}
\newcommand{\FALCUR}{\textsc{Falcur}\xspace}
\newcommand{\Panda}{\textsc{Panda}\xspace}

\iftoggle{arxiv}{
\begin{figure*}
\begin{minipage}[c]{0.5\linewidth}
    \centering
    \includegraphics[width=\textwidth]{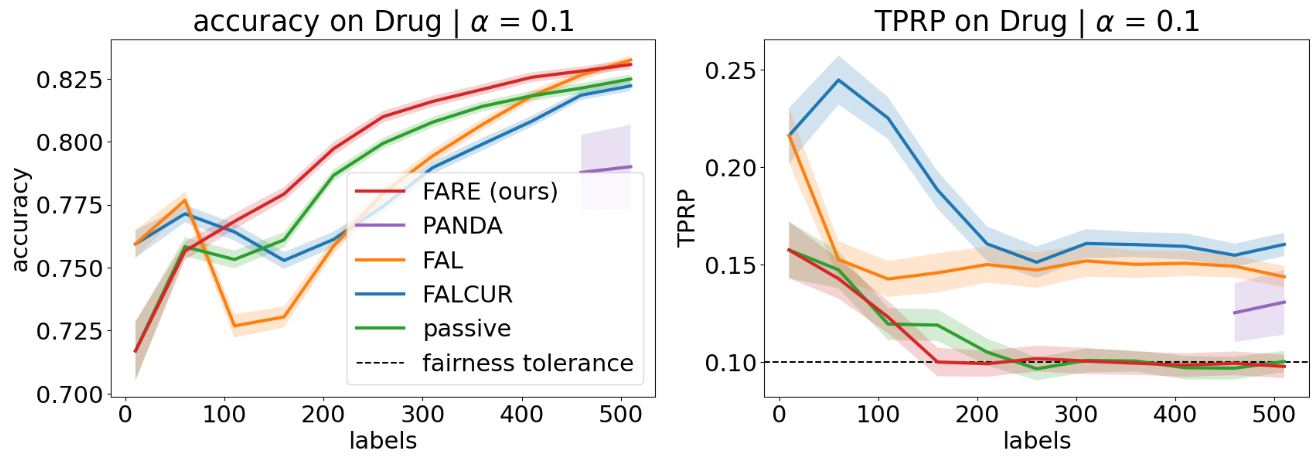}
    \caption{Performance on \texttt{Drug Consumption}}
    \label{fig:drug}
\end{minipage}
\hfill
\begin{minipage}[c]{0.5\linewidth}
    \centering
    \includegraphics[width=\textwidth]{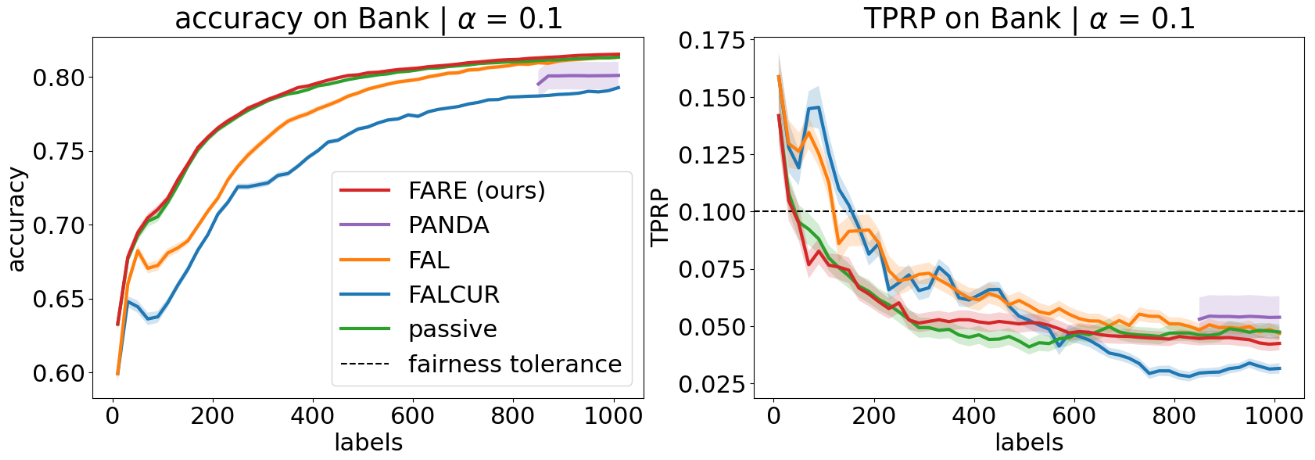}
    \caption{Performance on \texttt{Bank}}
    \label{fig:bank}
\end{minipage}
\\
\begin{minipage}[c]{0.5\linewidth}
    \centering
    \includegraphics[width=\textwidth]{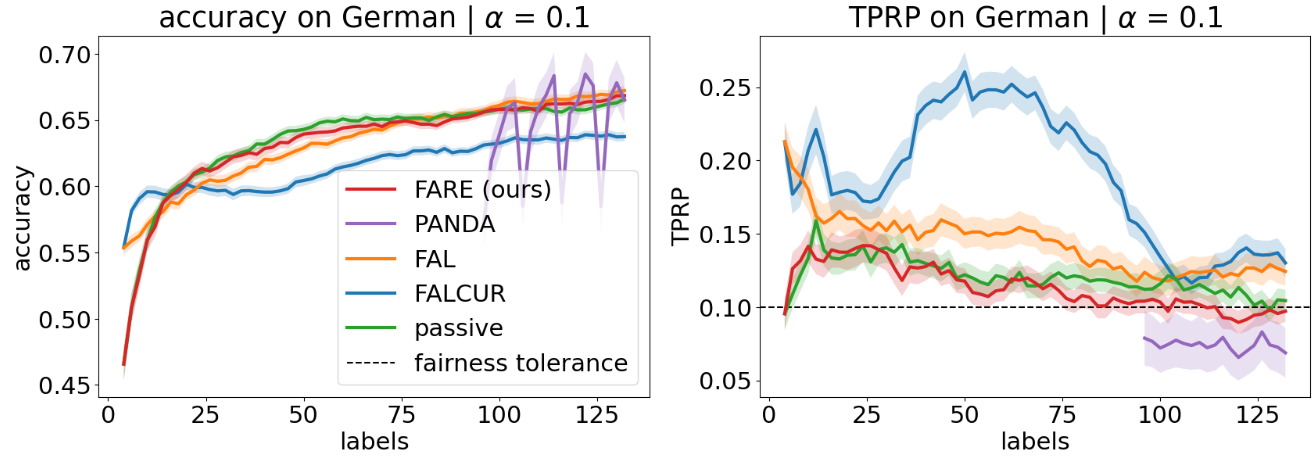}
    \caption{Performance on \texttt{German Credit}}
    \label{fig:german}
\end{minipage}
\hfill
\begin{minipage}[c]{0.5\linewidth}
    \centering
    \includegraphics[width=\textwidth]{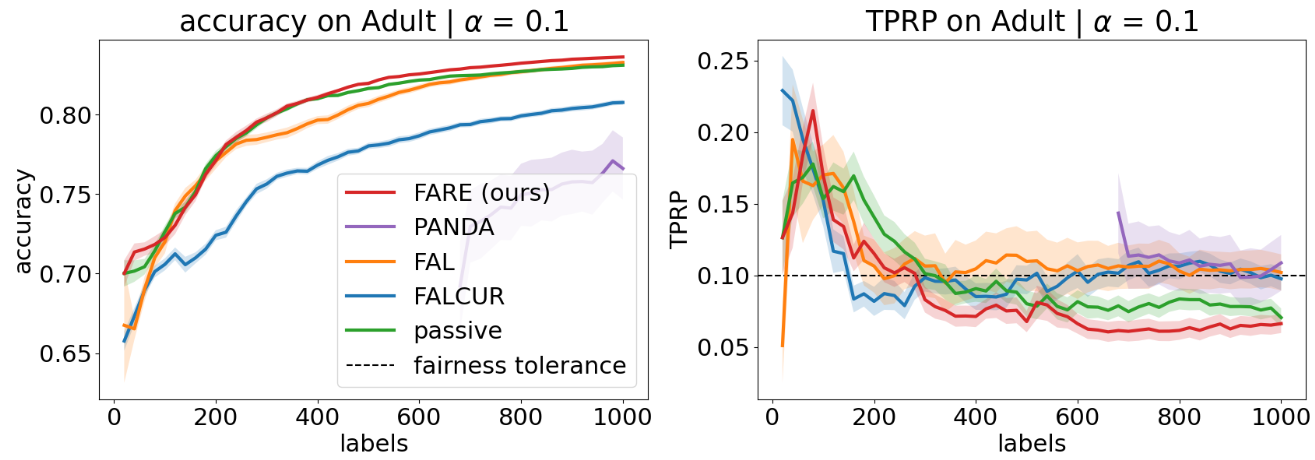}
    \caption{Performance on \texttt{Adult Income}}
    \label{fig:adult}
\end{minipage}
\\
\begin{minipage}[c]{0.5\linewidth}
    \centering
    \includegraphics[width=\textwidth]{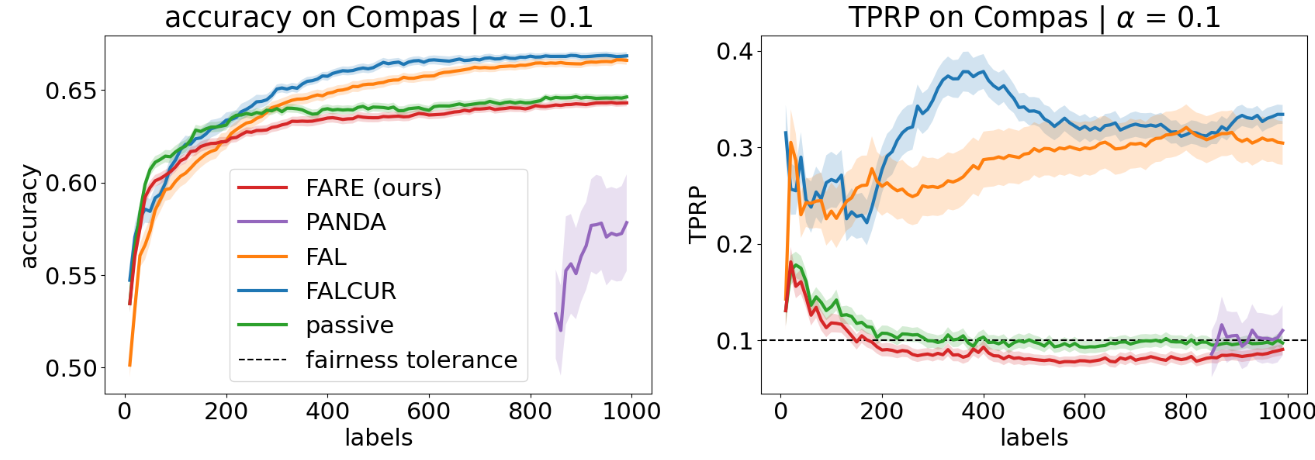}
    \caption{Performance on \texttt{Compas}}
    \label{fig:compas}
\end{minipage}
\hfill
\begin{minipage}[c]{0.5\linewidth}
    \centering
    \includegraphics[width=\textwidth]{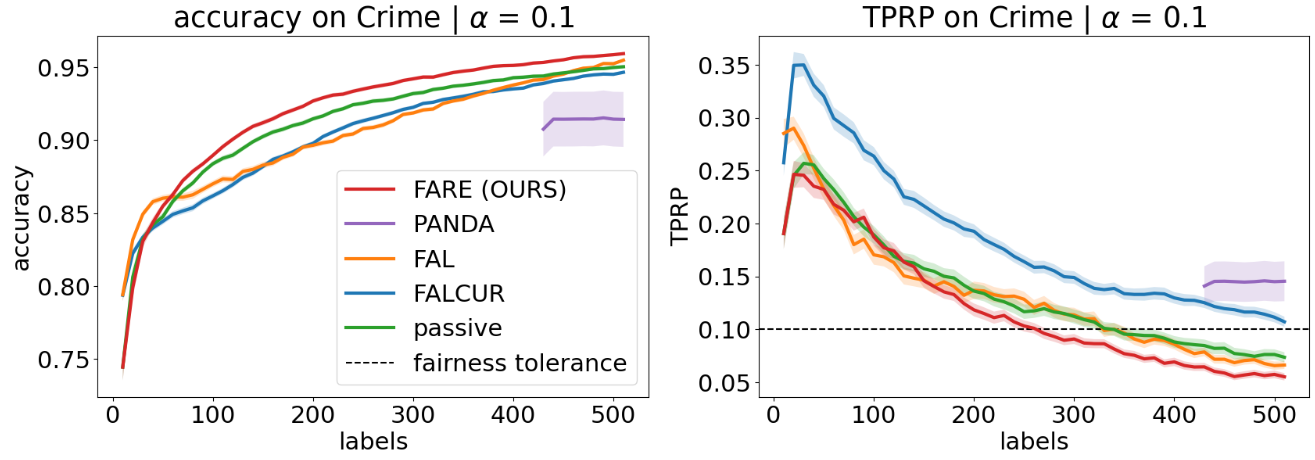}
    \caption{Performance on \texttt{Community and Crime}}
    \label{fig:crime}
\end{minipage}
\end{figure*}
}{
\begin{figure*}
\begin{minipage}[c]{0.45\linewidth}
    \centering
    \includegraphics[width=\textwidth]{new_final_plots/active/drug.png}
    \caption{Performance on \texttt{Drug Consumption}}
    \label{fig:drug}
\end{minipage}
\hfill
\begin{minipage}[c]{0.45\linewidth}
    \centering
    \includegraphics[width=\textwidth]{new_final_plots/active/bank.png}
    \caption{Performance on \texttt{Bank}}
    \label{fig:bank}
\end{minipage}
\\
\begin{minipage}[c]{0.45\linewidth}
    \centering
    \includegraphics[width=\textwidth]{new_final_plots/active/german.png}
    \caption{Performance on \texttt{German Credit}}
    \label{fig:german}
\end{minipage}
\hfill
\begin{minipage}[c]{0.45\linewidth}
    \centering
    \includegraphics[width=\textwidth]{new_final_plots/active/adult.png}
    \caption{Performance on \texttt{Adult Income}}
    \label{fig:adult}
\end{minipage}
\\
\begin{minipage}[c]{0.45\linewidth}
    \centering
    \includegraphics[width=\textwidth]{new_final_plots/active/compas.png}
    \caption{Performance on \texttt{Compas}}
    \label{fig:compas}
\end{minipage}
\hfill
\begin{minipage}[c]{0.45\linewidth}
    \centering
    \includegraphics[width=\textwidth]{new_final_plots/active/crime.png}
    \caption{Performance on \texttt{Community and Crime}}
    \label{fig:crime}
\end{minipage}
\end{figure*}
\vspace{-.2cm}
}

\subsection{Baselines Methods} 
In order to benchmark \ours, we conduct experiments comparing it against state-of-the-art algorithms \citep{anahideh2021fair, sharaf2022promoting, fajri2022falcur} for fair active learning, and a passive baseline. 
\begin{enumerate}[leftmargin=*]
    \item \Panda \citep{sharaf2022promoting}: \Panda aims to learn a data selection policy via meta-learning. 
    This algorithm formulates the problem as a bi-level optimization task, where the inner level involves training a classifier with a subset of labeled data, while the outer level focuses on updating the selection policy to strike a balance between fairness and accuracy in the classifier's performance.
    \item \FAL \citep{anahideh2021fair}: \FAL uses a sampling rule that blends between two selection criteria: one based on uncertainty and another based on assessing fairness, which estimates the potential disparity impact when labeling a specific data point (by calculating the expected disparity across all potential labels). \FAL chooses which data points to label in order to strike a balance between model accuracy and equity.\loose
    \item \FALCUR \citep{fajri2022falcur}: \FALCUR incorporates an acquisition function that assesses the representative score of each sample under consideration. This score is calculated by taking into account two key factors: uncertainty and similarity. By carefully balancing these elements, \FALCUR selects samples that contribute to accuracy improvement and ensure that fairness is maintained.
    \item Passive + fair oracle: This passive baseline randomly selects points from the pool of examples $\cDmy$ and trains the model using the \texttt{EFO} oracle with the same $\alpha - \frac{1}{\sqrt{n}}$ constraint as \ours on its current samples. 
\end{enumerate}

Each of these methods with the exception of the passive baseline assumes access to a pretraining dataset. As we are interested in the low-data regime, when we do not have access to a pretraining dataset, we simulate the pretraining dataset by allocating, for each method, some percentage of the label budget to uniform sampling to collect a ``pretrain'' dataset, and then run the algorithm in standard fashion from there. For each method, we sweep over the size of the pretrain dataset and plot performance for the best one. For all other hyperparameters, we use the values recommended by the original work.

\subsection{Performance Evaluation}

\newcommand{\blue}[1]{{\color{blue} #1}}
\newcommand{\red}[1]{{\color{red} #1}}
\iftoggle{arxiv}{
\begin{table*}
\begin{center}
\begin{tabular}{ |c||c|c|c|c|c||c|c|c|c|c| } 
  \hline
  & \multicolumn{5}{| c ||}{Accuracy (\% labeled correctly)} & \multicolumn{5}{| c |}{Fairness (TPRP, goal fairness = 0.1)} \\
  \cline{2-11}
 & \ours & \Panda & \FAL & \FALCUR & Passive & \ours & \Panda & \FAL & \FALCUR & Passive  \\ 
 \hline\hline
 \texttt{Drug} & \textbf{\blue{83.1}} & \red{79.0} & \red{83.2} & \red{82.2} & \blue{82.5} & \blue{0.098} & \red{0.131} & \red{0.144} & \red{0.160} & \blue{0.100} \\ 
& \footnotesize{\textbf{\blue{$\pm$ 0.2}}} & \footnotesize{\red{$\pm$ 2.1}} & \footnotesize{\red{$\pm$ 0.2}} & \footnotesize{\red{$\pm$ 0.2}} & \footnotesize{\blue{$\pm$ 0.2}} & \footnotesize{\blue{$\pm$ 0.006}} & \footnotesize{\red{$\pm$ 0.017}} & \footnotesize{\red{$\pm$ 0.0065}} & \footnotesize{\red{$\pm$ 0.006}} & \footnotesize{\blue{$\pm$ 0.005}} \\ 
  \hline
 \texttt{Bank} & \textbf{\blue{81.5}} & \blue{80.1} & \textbf{\blue{81.3}} & \blue{79.2} & \textbf{\blue{81.3}} & \blue{0.042} & \blue{0.054} & \blue{0.047} & \blue{0.032} & \blue{0.047} \\ 
 & \footnotesize{\textbf{\blue{$\pm$ 0.1}}} & \footnotesize{\blue{$\pm$ 0.4}} & \footnotesize{\textbf{\blue{$\pm$ 0.1}}} & \footnotesize{\blue{$\pm$ 0.1}} & \footnotesize{\textbf{\blue{$\pm$ 0.1}}} & \footnotesize{\blue{$\pm$ 0.003}} & \footnotesize{\blue{$\pm$ 0.009}} & \footnotesize{\blue{$\pm$ 0.003}} & \footnotesize{\blue{$\pm$ 0.002}} & \footnotesize{\blue{$\pm$ 0.001}} \\ 
  \hline
 \texttt{German} & \textbf{\blue{66.8}} & \textbf{\blue{66.4}} & \red{67.2} & \red{63.7} & \red{66.6} & \blue{0.097} & \blue{0.069} & \red{0.124} & \red{0.130} & \red{0.104} \\ 
& \footnotesize{\textbf{\blue{$\pm$ 0.3}}} & \footnotesize{\textbf{\blue{$\pm$ 1.4}}} & \footnotesize{\red{$\pm$ 0.4}} & \footnotesize{\red{$\pm$ 0.4}} & \footnotesize{\red{$\pm$ 0.3}} & \footnotesize{\blue{$\pm$ 0.007}} & \footnotesize{\blue{$\pm$ 0.016}} & \footnotesize{\red{$\pm$ 0.010}} & \footnotesize{\red{$\pm$ 0.010}} & \footnotesize{\red{$\pm$ 0.007}} \\ 
  \hline
 \texttt{Adult} & \textbf{\blue{83.6}} & \red{76.6} & \blue{83.2} & \blue{80.8} & \blue{83.1} & \blue{0.065} & \red{0.109} & \blue{0.102} & \blue{0.097} & \blue{0.068} \\ 
 & \footnotesize{\textbf{\blue{$\pm$ 0.0}}} & \footnotesize{\red{$\pm$ 2.0}} & \footnotesize{\blue{$\pm$ 0.1}} & \footnotesize{\blue{$\pm$  0.2}} & \footnotesize{\blue{$\pm$ 0.0}} & \footnotesize{\blue{$\pm$ 0.007}} & \footnotesize{\red{$\pm$ 0.019}} & \footnotesize{\blue{$\pm$ 0.013}} & \footnotesize{\blue{$\pm$ 0.008}} & \footnotesize{\blue{$\pm$ 0.006}} \\ 
  \hline
 \texttt{Compas} & \textbf{\blue{64.3}} & \red{57.8} & \red{66.6} & \red{66.8} & \textbf{\blue{64.6}} & \blue{0.088} &  \red{0.110} & \red{0.304} & \red{0.334} & \blue{0.099} \\ 
 & \footnotesize{\textbf{\blue{$\pm$ 0.1}}} & \footnotesize{\red{$\pm$ 1.3}} & \footnotesize{\red{$\pm$ 0.2}} & \footnotesize{\red{$\pm$ 0.2}} & \footnotesize{\textbf{\blue{$\pm$ 0.2}}} & \footnotesize{\blue{$\pm$ 0.006}} & \footnotesize{ \red{$\pm$ 0.026}} & \footnotesize{\red{$\pm$ 0.023}} & \footnotesize{\red{$\pm$ 0.009}} & \footnotesize{\blue{$\pm$ 0.007}} \\ 
  \hline
 \texttt{Crime} & \textbf{\blue{95.9}} & \red{91.4} & \blue{95.5} & \red{94.7} & \blue{95.0} & \blue{0.055} & \red{0.145}  & \blue{0.066} & \red{0.107} & \blue{0.074} \\ 
 & \footnotesize{\textbf{\blue{$\pm$ 0.1}}} & \footnotesize{\red{$\pm$ 1.7}} & \footnotesize{\blue{$\pm$ 0.1}} & \footnotesize{\red{$\pm$ 0.1}} & \footnotesize{\blue{$\pm$ 0.1}} & \footnotesize{\blue{$\pm$ 0.004}} & \footnotesize{\red{$\pm$ 0.019} } & \footnotesize{\blue{$\pm$ 0.004}} & \footnotesize{\red{$\pm$ 0.005}} & \blue{$\pm$ 0.005} \\
 \hline
\end{tabular}
\end{center}
\vspace{-0.8em}
\caption{Final accuracy and TPRP values for each method and dataset. \blue{Blue} indicates fairness threshold met, while \red{red} indicates threshold not met. Best accuracy among fair methods is indicated by \textbf{bold} font. Confidence intervals are standard errors based on $100$ trials.}
\label{tab:results}
\end{table*}
\begin{table*}
\begin{center}
\begin{tabular}{ |c||c|c|c|c|c||c|c|c|c|c| } 
  \hline
  & \multicolumn{5}{| c ||}{Accuracy (\% labeled correctly)} & \multicolumn{5}{| c |}{Fairness (TPRP, goal fairness = 0.1)} \\
  \cline{2-11}
 & \ours & \makecell{\ours \\ w/o $\lamfair$} & \FAL & \FALCUR & Passive & \ours & \makecell{\ours \\ w/o $\lamfair$} & \FAL & \FALCUR & Passive  \\ 
 \hline\hline
 \texttt{Synt.} & \textbf{\blue{58.8}} & \red{57.5} & \red{90.0} & \red{89.9} & \red{61.1} & \blue{0.095} & \red{0.123} & \red{0.402} & \red{0.303} & \red{0.123} \\ 
 & \footnotesize{\textbf{\blue{$\pm$  0.6}}} & \footnotesize{\red{$\pm$ 0.8}} & \footnotesize{\red{$\pm$ 1.7}} & \footnotesize{\red{$\pm$ 1.3}} & \footnotesize{\red{$\pm$ 0.8}} & \footnotesize{\blue{$\pm$ 0.009}} & \footnotesize{\red{$\pm$ 0.016}} & \footnotesize{\red{$\pm$ 0.022}} & \footnotesize{\red{$\pm$ 0.013}} & \footnotesize{\red{$\pm$ 0.013}} \\ 
 \hline
\end{tabular}
\end{center}
\vspace{-0.5em}
\caption{Ablation on the role of group-dependent sampling, $\lamfair$, on the synthetically generated dataset. Note that \Panda does not converge on this dataset, so we have omitted it from the table.  Confidence intervals are standard errors based on $100$ trials.}
\label{tab:results_ablation}
\vspace{-1em}
\end{table*}
}{
\begin{table*}
\begin{center}
\begin{tabular}{ |c||c|c|c|c|c||c|c|c|c|c| } 
  \hline
  & \multicolumn{5}{| c ||}{Accuracy (\% labeled correctly)} & \multicolumn{5}{| c |}{Fairness (TPRP, goal fairness = 0.1)} \\
  \cline{2-11}
 & \ours & \Panda & \FAL & \FALCUR & Passive & \ours & \Panda & \FAL & \FALCUR & Passive  \\ 
 \hline\hline
 \texttt{Drug} & \textbf{\blue{83.1}} & \red{79.0} & \red{83.2} & \red{82.2} & \blue{82.5} & \blue{0.098} & \red{0.131} & \red{0.144} & \red{0.160} & \blue{0.100} \\ 
 \texttt{Bank} & \textbf{\blue{81.5}} & \blue{80.1} & \textbf{\blue{81.3}} & \blue{79.2} & \textbf{\blue{81.3}} & \blue{0.042} & \blue{0.054} & \blue{0.047} & \blue{0.032} & \blue{0.047} \\ 
 \texttt{German} & \textbf{\blue{66.8}} & \blue{66.4} & \red{67.2} & \red{63.7} & \red{66.6} & \blue{0.097} & \blue{0.069} & \red{0.124} & \red{0.130} & \red{0.104} \\ 
 \texttt{Adult} & \textbf{\blue{83.6}} & \red{76.6} & \blue{83.2} & \blue{80.8} & \blue{83.1} & \blue{0.065} & \red{0.109} & \blue{0.102} & \blue{0.097} & \blue{0.068} \\ 
 \texttt{Compas} & \textbf{\blue{64.3}} & \red{57.8} & \red{66.6} & \red{66.8} & \textbf{\blue{64.6}} & \blue{0.088} &  \red{0.110} & \red{0.304} & \red{0.334} & \blue{0.099} \\ 
 \texttt{Crime} & \textbf{\blue{95.9}} & \red{91.4} & \blue{95.5} & \red{94.7} & \blue{95.0} & \blue{0.055} & \red{0.145}  & \blue{0.066} & \red{0.107} & \blue{0.074} \\ 
 \hline
\end{tabular}
\end{center}
\vspace{-0.8em}
\caption{Final accuracy and TPRP values for each method and dataset. \blue{Blue} indicates fairness threshold met, while \red{red} indicates threshold not met. Best accuracy among fair methods is indicated by \textbf{bold} font.}
\label{tab:results}
\end{table*}
\begin{table*}
\begin{center}
\begin{tabular}{ |c||c|c|c|c|c||c|c|c|c|c| } 
  \hline
  & \multicolumn{5}{| c ||}{Accuracy (\% labeled correctly)} & \multicolumn{5}{| c |}{Fairness (TPRP, goal fairness = 0.1)} \\
  \cline{2-11}
 & \ours & \makecell{\ours \\ w/o $\lamfair$} & \FAL & \FALCUR & Passive & \ours & \makecell{\ours \\ w/o $\lamfair$} & \FAL & \FALCUR & Passive  \\ 
 \hline\hline
 \texttt{Synthetic} & \textbf{\blue{58.8}} & \red{57.5} & \red{90.0} & \red{89.9} & \red{61.1} & \blue{0.095} & \red{.123} & \red{0.402} & \red{0.303} & \red{0.123} \\ 
 \hline
\end{tabular}
\end{center}
\vspace{-0.5em}
\caption{Ablation on the role of group-dependent sampling, $\lamfair$, on synthetically generated dataset. Note that \Panda does not converge on this dataset, so we have omitted it from the table.}
\label{tab:results_ablation}
\vspace{-1em}
\end{table*}
}

We first consider the case when the fairness constraint is TPRP with $\alpha = 0.1$, and illustrate the accuracy  and fairness vs. number of samples for our method and all baselines. For all methods and datasets, with the exception of \Panda, results are averaged over 100 trials---for \Panda results are averaged over only 50 trials, due to its large computational cost. Shaded regions denote one standard error. Note as well that the performance of \Panda starts at a later step since this method requires a large pretrain dataset to perform effectively, and in pretraining does not produce a classifier.\loose

Our results are given in \Cref{fig:drug,fig:bank,fig:german,fig:adult,fig:compas,fig:crime}, and we state the accuracy and fairness values obtained at the final step in \Cref{tab:results}. As these results illustrate, \ours consistently outperforms or matches the passive baseline, as well as all existing approaches to fair active classification. 
We highlight several key features of these results.

First, note that the only methods able to consistently produce classifiers which meet the fairness constraint of $\alpha = 0.1$ are \ours and the passive baselines. While all other methods frequently return classifiers that are unfair, both \ours and the passive baseline return classifiers that, by the final step, are fair on each dataset. We observe that, for very small number of labels, even \ours and the passive baselines produce classifiers which do not meet the fairness constraint---this is to be expected since, for a very small number of samples, it is difficult to estimate the fairness accurately enough to return a fair classifier. We emphasize that, though in some cases the accuracy of \ours is exceeded by baseline approaches, in most situations the baselines do not meet the fairness constraints. Since we are interested in \emph{fair} classification, accuracy values can only be compared in the regime where each classifier is fair.

Second, we highlight the difference in the number of samples required to achieve a given accuracy for \ours as compared to the passive baseline. In particular, on the \texttt{Drug}, \texttt{Adult}, and \texttt{Crime} datasets, \ours requires between 1.4-2x less samples than passive to achieve the final accuracy achieved by passive, while ensuring the fairness constraint is still met. While this gain is not present on every dataset---for \texttt{Bank} and \texttt{Compas} the performance of \ours and the passive baseline are comparable---these results illustrate that active learning can yield substantial gains over passive approaches for fair classification, while simultaneously ensuring fairness constraints are met. \loose

\vspace{-.2cm}

\paragraph{Fairness Constraints Beyond TPRP.}
The previously considered results illustrate the performance of each method when the fairness constraint is TPRP. To illustrate the generality of our approach, in \Cref{fig:adult_eo} we also consider the performance of each method when the fairness constraint is equalized odds. As with TPRP, we see that \ours produces a fair classifier while existing approaches fail to, and yields a marked improvement over the passive baseline in terms of accuracy.

\iftoggle{arxiv}{
\begin{figure}[h!]
    \centering
    \includegraphics[width=0.85\textwidth]{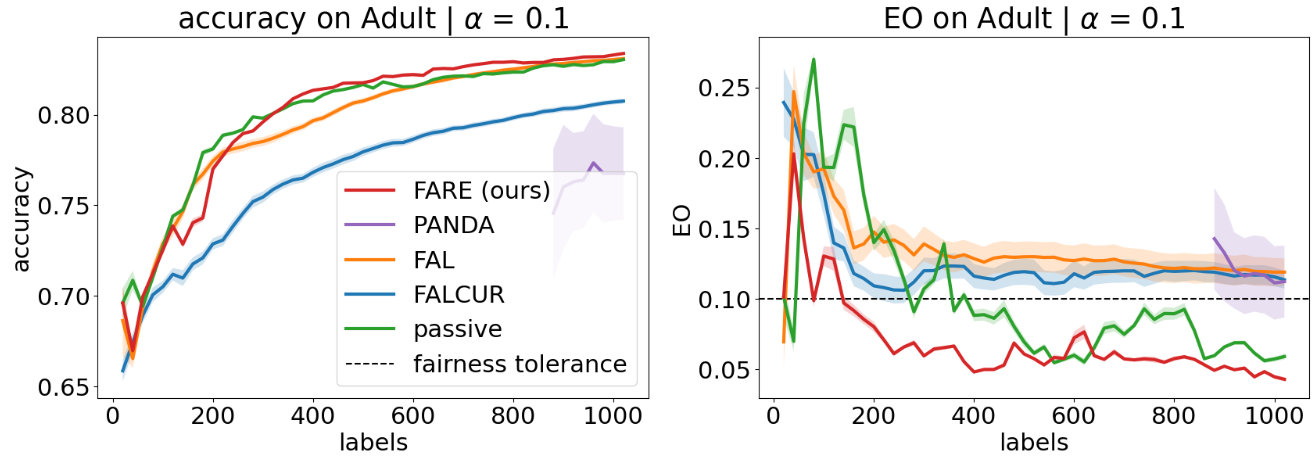}
    \caption{Performance on the \texttt{Adult Income} dataset for Equalized Odds}
    \label{fig:adult_eo}
\end{figure}
}{
\begin{figure}[h!]
    \centering
    \includegraphics[width=0.45\textwidth]{new_final_plots/active/adulteo.png}
    \caption{Performance on the \texttt{Adult Income} dataset for Equalized Odds}
    \label{fig:adult_eo}
\end{figure}
\vspace{-.1cm}
}

\subsection{Ablation Experiments}\label{sec:ablation}
In this section, we illustrate the critical nature of two features of \ours. First, in \Cref{fig:correction_ablation}, we compare the performance of \ours with the fairness tolerance $\alpha - 1/\sqrt{n}$, with the $1/\sqrt{n}$ term correcting for the estimation error in the fairness constraint,  to the performance with the fairness tolerance simply set to $\alpha$. As shown, with the $1/\sqrt{n}$ correction, the classifier returned by \ours is unfair, while with the correction it is fair. We remark as well that, though the $1/\sqrt{n}$ correction is not precisely what is justified by  \Cref{cor:fairness_est_simple}, this value nonetheless consistently produces fair classifiers.
\iftoggle{arxiv}{
\begin{figure}[h!]
    \centering
    \includegraphics[width=0.85\textwidth]{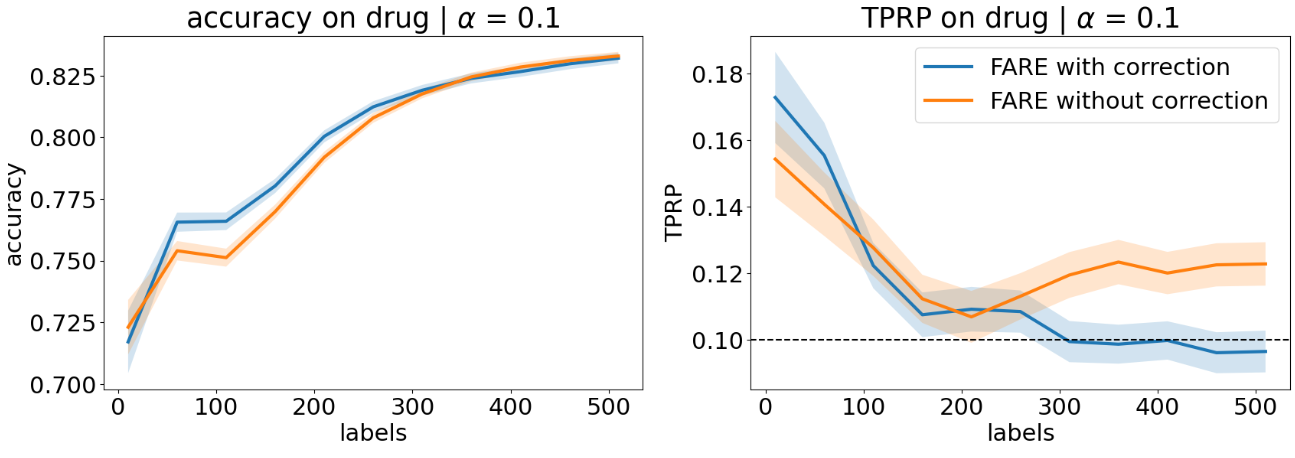}
    \caption{Ablation on fairness tolerance correction on \texttt{Drug} dataset}
    \label{fig:correction_ablation}
\end{figure} 
}{
\vspace{-.2cm}
\begin{figure}[h!]
    \centering
    \includegraphics[width=0.45\textwidth]{new_final_plots/ablations/drug_ablation.png}
    \caption{Ablation on fairness tolerance correction on \texttt{Drug} dataset}
    \label{fig:correction_ablation}
\end{figure} 
\vspace{-.2cm}
}

Lastly, in \Cref{tab:results_ablation}, we compare the performance of \ours with and without $\lamfair$, and additionally compare to the performance of the other baselines methods. We evaluate this on a synthetically generated dataset for which there is a large group imbalance---one group has significantly more examples in the dataset than the other. In this setting, if points are not explicitly sampled from the group with the smaller number of examples, virtually all samples will be taken from the larger group, which will cause the fairness estimates to be inaccurate, the resulting classifier unfair. This is illustrated in \Cref{tab:results_ablation}, where we see that without $\lamfair$, \ours produces an unfair classifier, similar to existing approaches. However, with $\lamfair$, \ours successfully achieves fairness. In conclusion, the inclusion of $\lamfair$ in \ours effectively ensures fairness constraints are met, especially when dealing with a significant group imbalance in the dataset.\loose

\iftoggle{arxiv}{
\paragraph{Generality of \ours.}
Finally, we delve into the versatility of our proposed method, \ours, by conducting experiments across different machine learning models, specifically comparing its performance on decision trees with that on logistic regression. Our objective is to demonstrate the general applicability of \ours and highlight its consistent effectiveness in guaranteeing fairness fairness across various model architectures.

To assess the generality of \ours, we compare results from experiments conducted on both decision trees and logistic regression models. In \Cref{fig:dt_lr_comparison}, we showcase the performance of \ours on decision trees compared to logistic regression. Notably, the observed gains in accuracy and the fairness guarantees achieved by \ours on decision trees closely parallel those attained on logistic regression, compared to the passive baseline. This emphasizes the validity of our proposed method across different model types.
\begin{figure}
    \centering
    \includegraphics[width=0.85\textwidth]{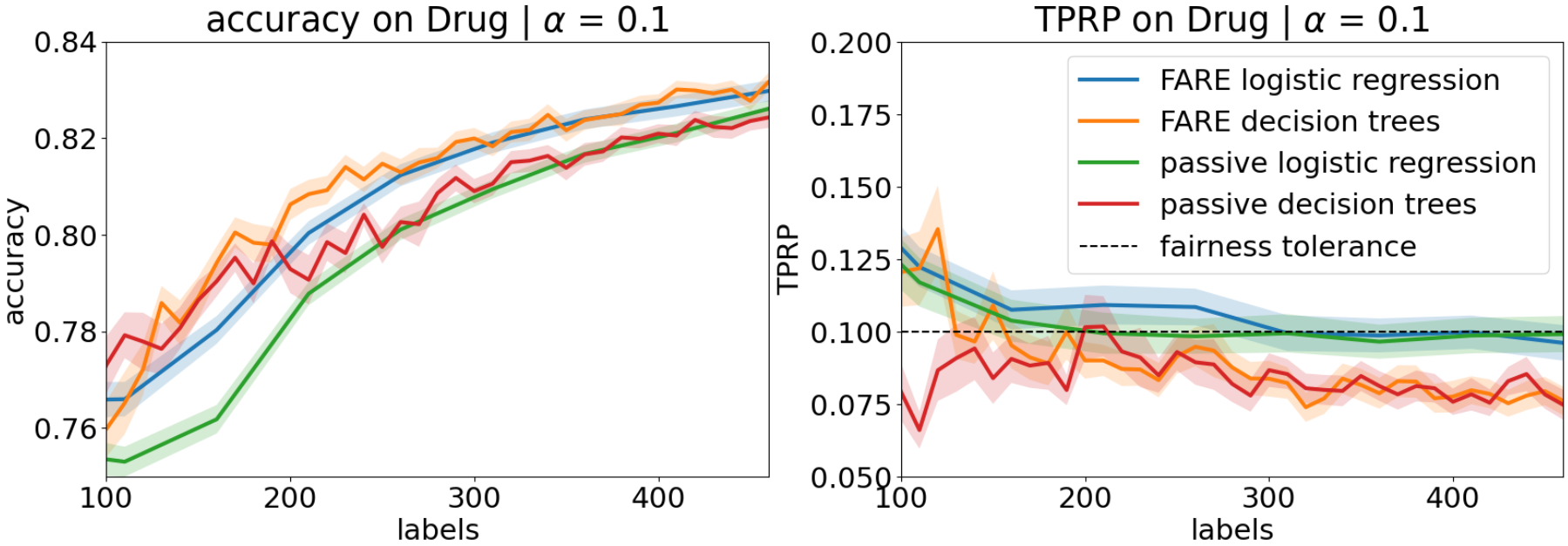}
    \caption{Performance on the \texttt{Drug Consumption} dataset for logistic regression and decision trees model}
    \label{fig:dt_lr_comparison}
\end{figure}
}{}

\section{Conclusion}

In conclusion, this paper introduces a novel active learning framework designed to tackle the challenges of bias reduction and accuracy improvement in data-scarce environments critical to machine learning applications. By combining an exploration procedure inspired by posterior sampling with a fair classification subroutine, our proposed approach effectively maximizes accuracy while ensuring fairness constraints in very data-scarce regimes. Through comprehensive evaluations on established real-world benchmark datasets, we demonstrate the efficacy of our framework, highlighting its superiority over state-of-the-art methods. This work contributes to advancing the development of fair models in situations where collecting large labeled datasets is impractical, offering a promising solution for critical applications in machine learning.

\newpage
\bibliographystyle{apalike}
\bibliography{ref}
\newpage

\newpage
\appendix

\section{Datasets description}

\textbf{\textit{\texttt{Adult income} dataset} \citep{lichman2013UCI}}: This dataset comprises $48,842$ examples with demographic information. The task is to predict whether an individual's income exceeds $50k\$$ annually. We chose the protected attribute to be binarized gender.

\textbf{\textit{\texttt{Compas} dataset} \citep{lichman2013UCI}}: This dataset, which was released by \citet{angwin2022machine}, encompasses $5,278$ data related to juvenile felonies. It includes details such as marital status, ethnicity, age, prior criminal history, and the severity of the current arrest charges. In our analysis, we identify binarized gender as a sensitive attribute. In line with established conventions \citep{corbett2017algorithmic, anahideh2021fair}, we adopt a two-year violent recidivism record as the ground truth for assessing recidivism.

\textbf{\textit{\texttt{Drug consumption} dataset \citep{fehrman2017factor}}}: 
This dataset consists of $1,885$ entries containing information about individuals, where each entry includes five demographic characteristics (such as Age, binarized Gender, or Education), seven measurements related to personality traits (such as Nscore indicating neuroticism and Ascore representing agreeableness), and 18 descriptors detailing the subject's most recent consumption of a specific substance (like Cannabis). We chose the task of predicting whether an individual consumed Cannabis in the last year and chose the protected attribute to be (binarized) Gender. 

\textbf{\textit{\texttt{German Credit} dataset \citep{hofmann1994statlog}}}: The German Credit dataset classifies people as good or bad credit risks using the profile and history of $1,000$ clients. We set the binarized gender as the sensitive attribute.

\textbf{\textit{\texttt{Community and Crime} dataset \citep{redmond2002data}}}: 
The Crime and Community dataset consists of $1,902$ instances of crimes with $128$ attributes related to the crime and the corresponding community. It uses `violent crimes' as the target variable and combines `percentage of non-white' as the protected attribute. The target variable is binarized to categorize communities as high or low crime based on a threshold of $500$. The protected attribute is also binarized, separating communities with non-white residents below $20\%$.\loose

\textbf{\textit{\texttt{Bank} dataset \citep{moro2014data}}}:  The task is to predict whether the client has subscribed to a term deposit service based on $11,162$ data points with features such as marital status and age. We set the client having tertiary education as the sensitive attribute.

\textbf{\textit{\texttt{Synthetic} dataset}}: We created the synthetic dataset in the following manner. It is depicted in \Cref{fig:synthetic}. The dataset consists of two dimensions, and data for group $0$ is generated by randomly sampling $10,000$ data points from a Gaussian distribution with a mean of $(0, 0)$, while group $1$ comprises $100$ data points sampled from $(10, 10)$. For group $0$ (and group $1$), labels are assigned a value of $1$ if the x-coordinate (or y-coordinate) of the data point is greater than $0$, and $0$ otherwise. This ensures that each group is linearly separable, but their combination is not.
\begin{figure}
    \centering
    \includegraphics[width=0.45\textwidth]{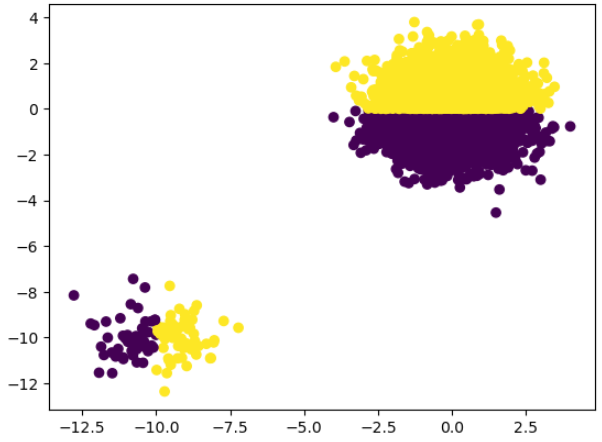}
    \caption{Synthetic dataset}
    \label{fig:synthetic}
\end{figure}

\section{Performance of baseline algorithms with different pre-trained dataset sizes}
We report the results of the sweeps over the size of the pretrain dataset in \Cref{fig:drug_panda,fig:bank_panda,fig:german_panda,fig:adult_panda,fig:compas_panda,fig:crime_panda,fig:drug_fal,fig:bank_fal,fig:german_fal,fig:adult_fal,fig:compas_fal,fig:crime_fal}.
Due to its large computational cost, we compared the performance of \Panda for two sizes of pretrain datasets.

\iftoggle{arxiv}{

\begin{figure*}
\begin{minipage}[c]{0.5\linewidth}
    \centering
    \includegraphics[width=\textwidth]{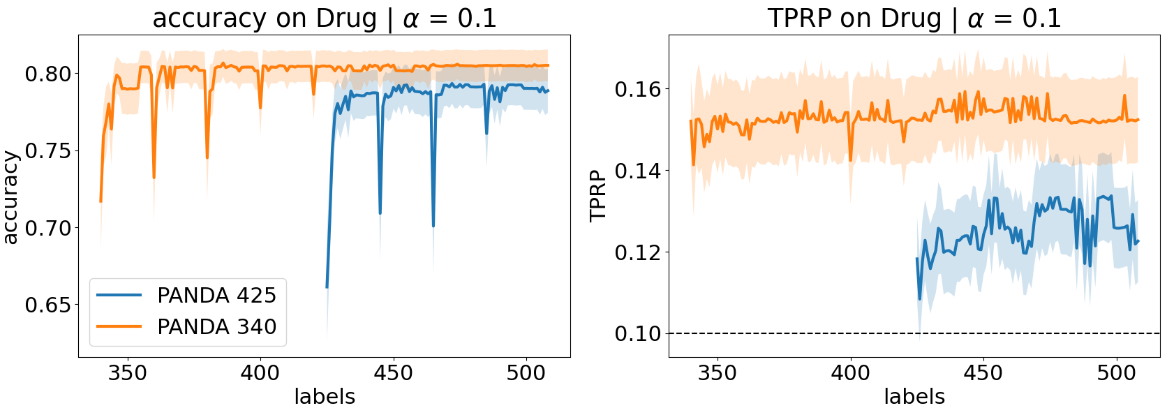}
    \caption{Performance on \texttt{Drug Consumption}}
    \label{fig:drug_panda}
\end{minipage}
\hfill
\begin{minipage}[c]{0.5\linewidth}
    \centering
    \includegraphics[width=\textwidth]{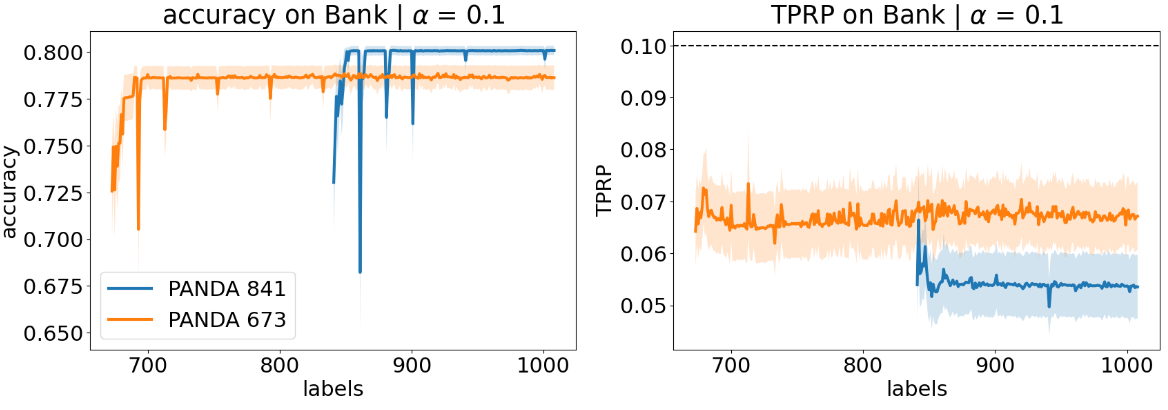}
    \caption{Performance on \texttt{Bank}}
    \label{fig:bank_panda}
\end{minipage}
\\
\begin{minipage}[c]{0.5\linewidth}
    \centering
    \includegraphics[width=\textwidth]{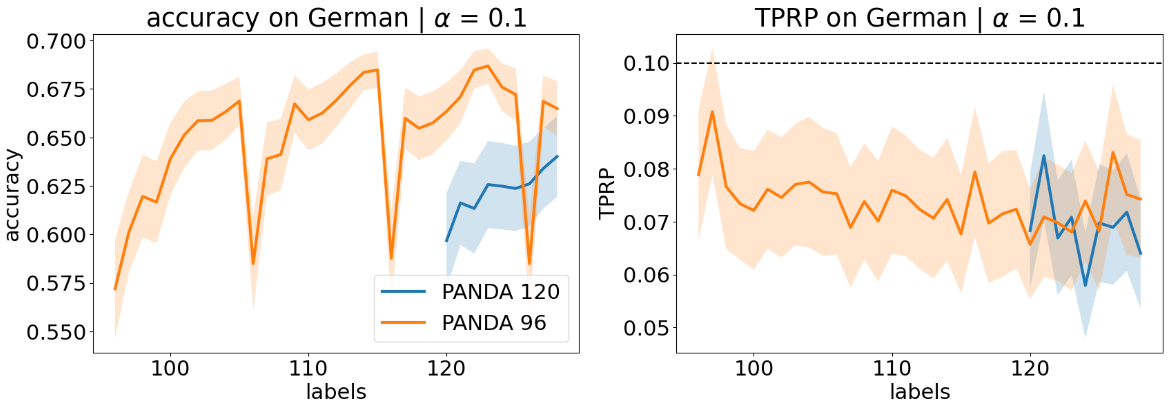}
    \caption{Performance on \texttt{German Credit}}
    \label{fig:german_panda}
\end{minipage}
\hfill
\begin{minipage}[c]{0.5\linewidth}
    \centering
    \includegraphics[width=\textwidth]{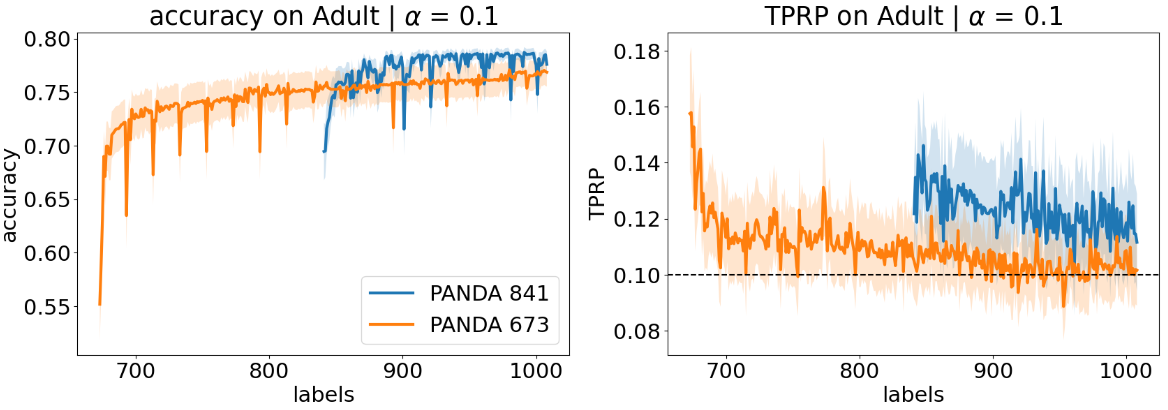}
    \caption{Performance on \texttt{Adult Income}}
    \label{fig:adult_panda}
\end{minipage}
\\
\begin{minipage}[c]{0.5\linewidth}
    \centering
    \includegraphics[width=\textwidth]{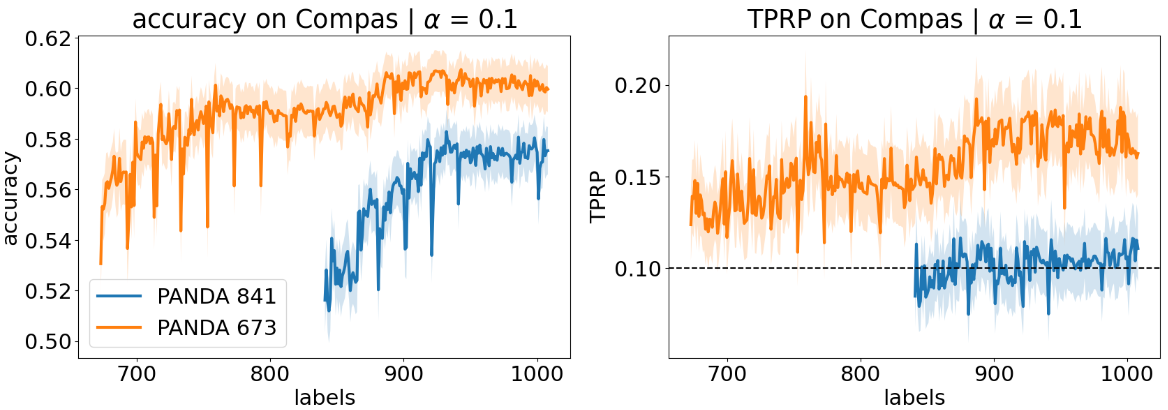}
    \caption{Performance on \texttt{Compas}}
    \label{fig:compas_panda}
\end{minipage}
\hfill
\begin{minipage}[c]{0.5\linewidth}
    \centering
    \includegraphics[width=\textwidth]{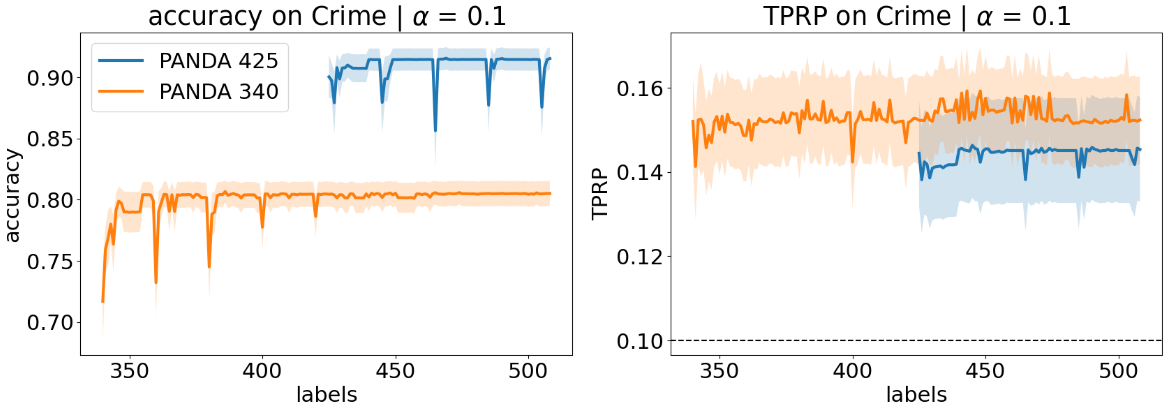}
    \caption{Performance on \texttt{Community and Crime}}
    \label{fig:crime_panda}
\end{minipage}
\end{figure*}

\begin{figure*}
\begin{minipage}[c]{0.5\linewidth}
    \centering
    \includegraphics[width=\textwidth]{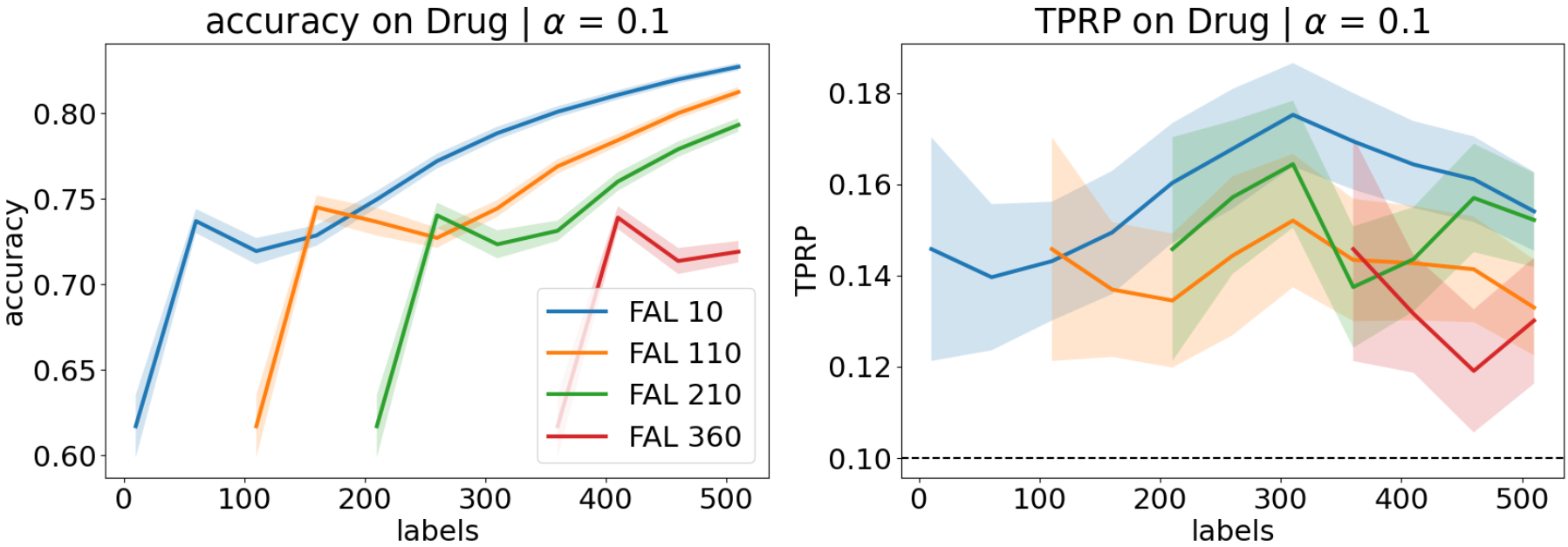}
    \caption{Performance on \texttt{Drug Consumption}}
    \label{fig:drug_fal}
\end{minipage}
\hfill
\begin{minipage}[c]{0.5\linewidth}
    \centering
    \includegraphics[width=\textwidth]{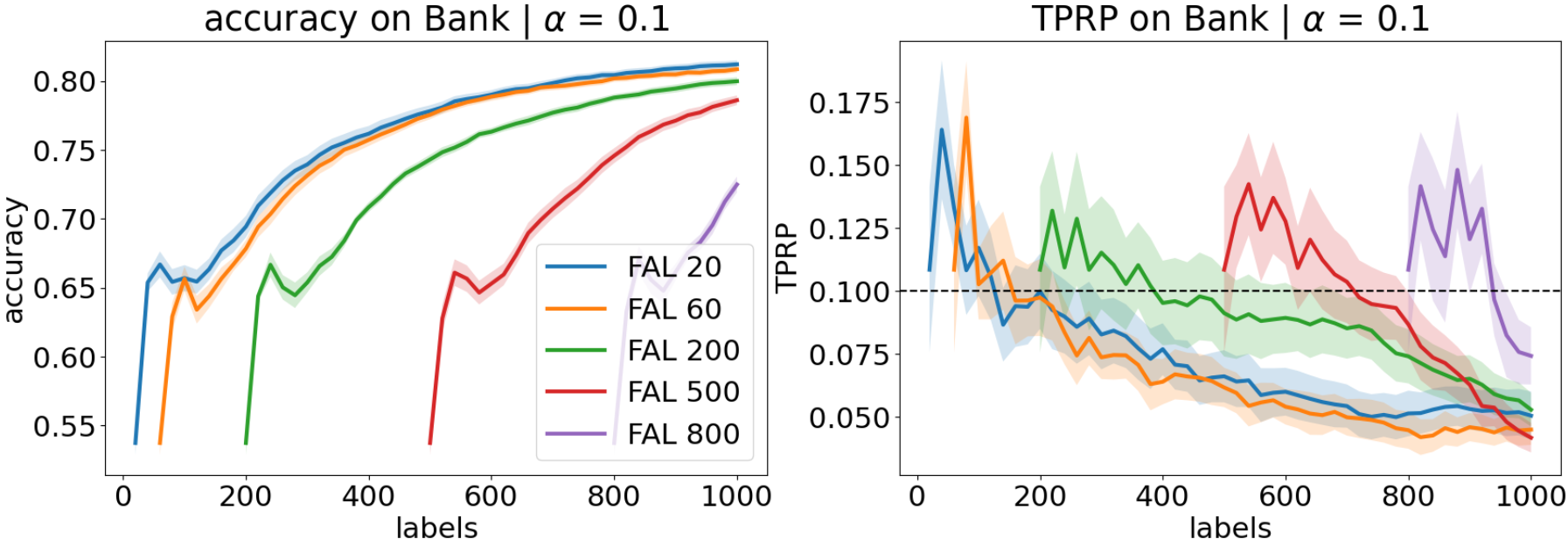}
    \caption{Performance on \texttt{Bank}}
    \label{fig:bank_fal}
\end{minipage}
\\
\begin{minipage}[c]{0.5\linewidth}
    \centering
    \includegraphics[width=\textwidth]{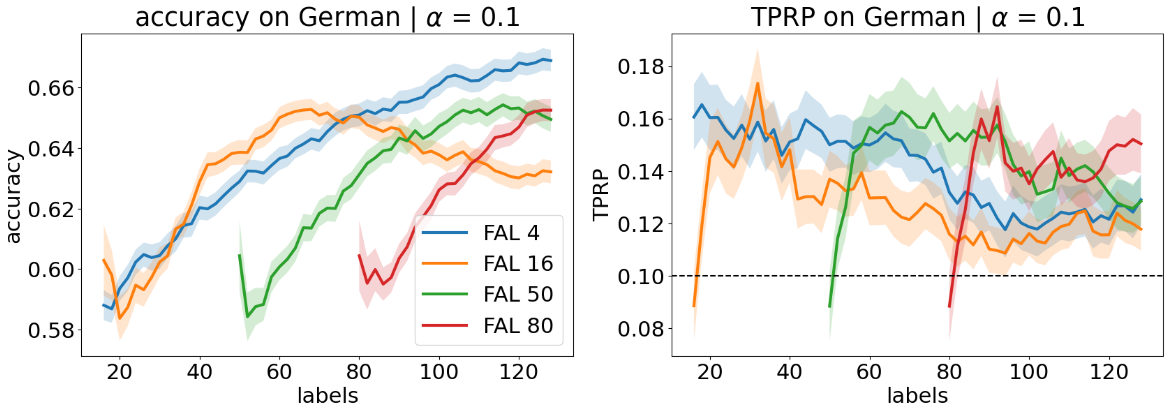}
    \caption{Performance on \texttt{German Credit}}
    \label{fig:german_fal}
\end{minipage}
\hfill
\begin{minipage}[c]{0.5\linewidth}
    \centering
    \includegraphics[width=\textwidth]{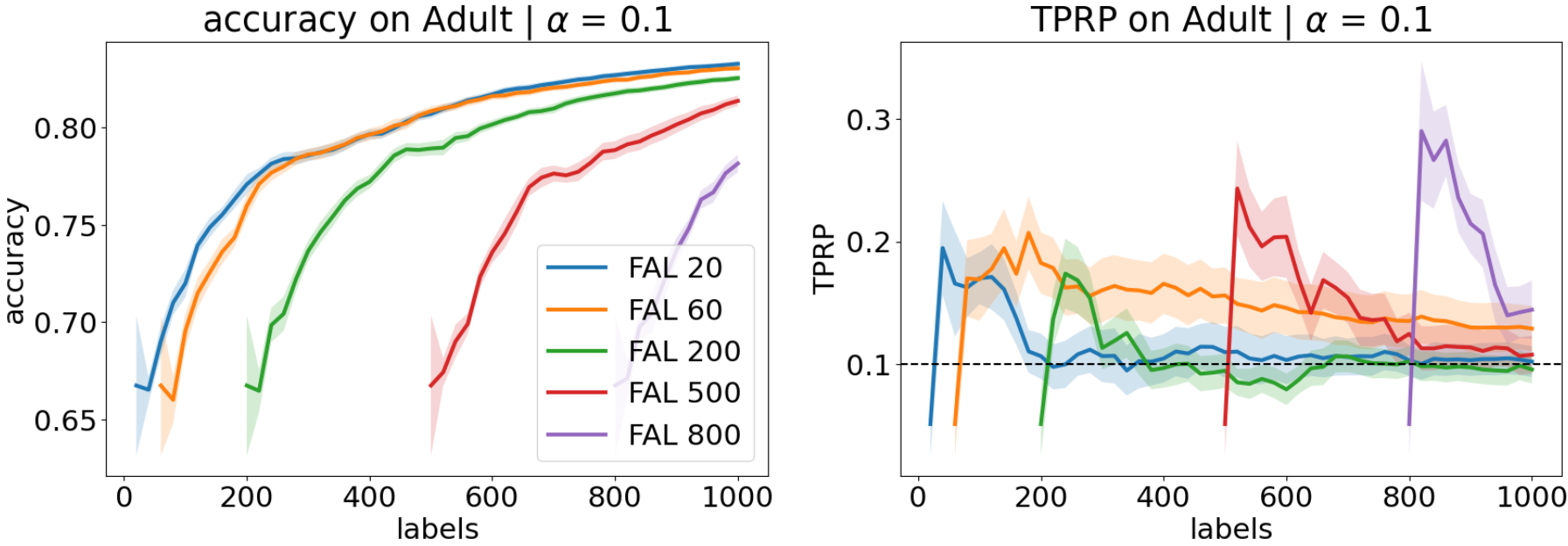}
    \caption{Performance on \texttt{Adult Income}}
    \label{fig:adult_fal}
\end{minipage}
\\
\begin{minipage}[c]{0.5\linewidth}
    \centering
    \includegraphics[width=\textwidth]{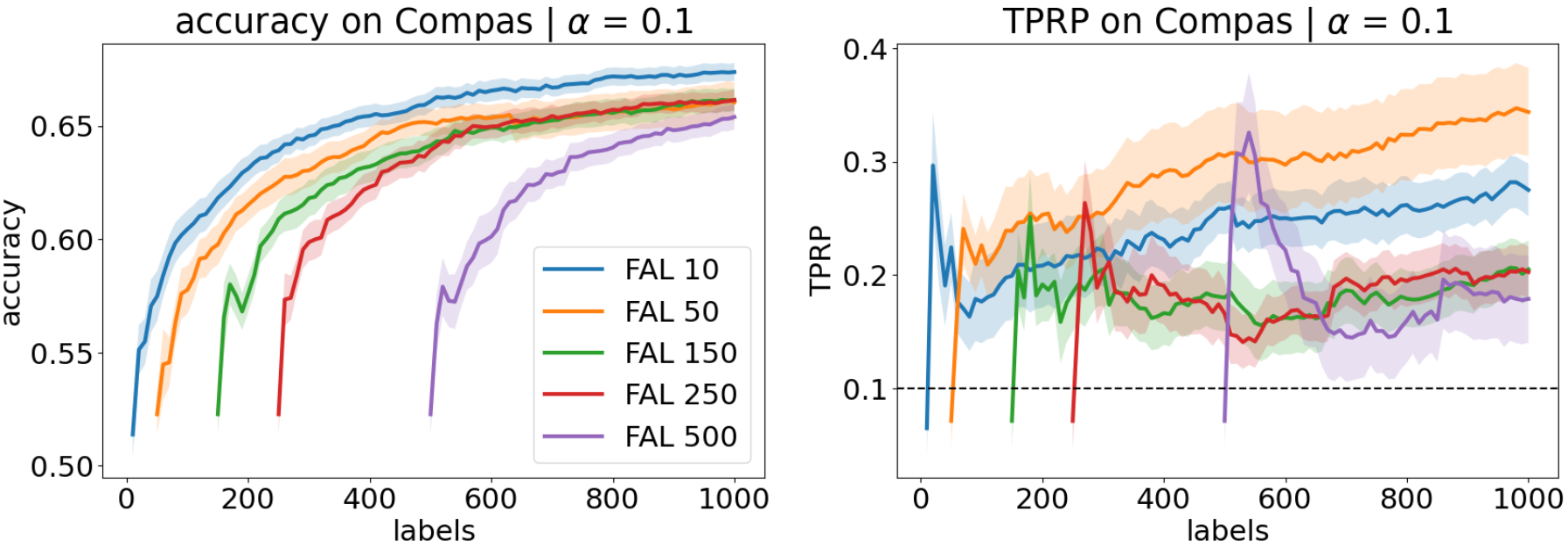}
    \caption{Performance on \texttt{Compas}}
    \label{fig:compas_fal}
\end{minipage}
\hfill
\begin{minipage}[c]{0.5\linewidth}
    \centering
    \includegraphics[width=\textwidth]{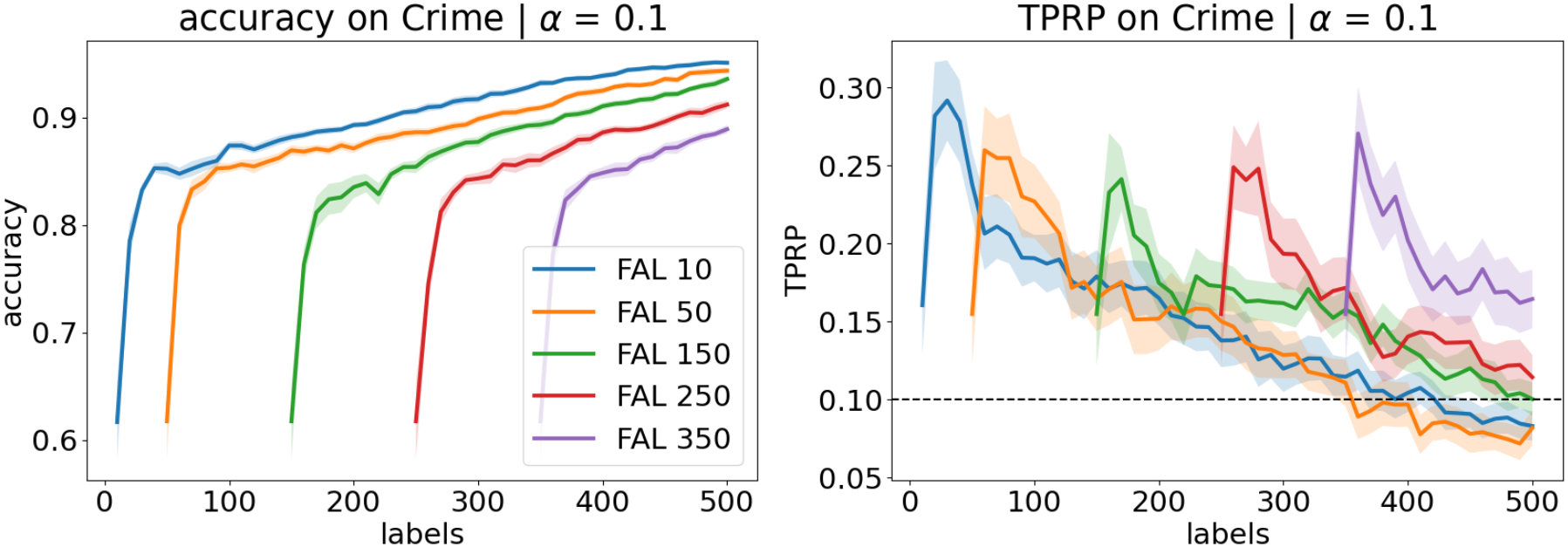}
    \caption{Performance on \texttt{Community and Crime}}
    \label{fig:crime_fal}
\end{minipage}
\end{figure*}

\begin{figure*}
\begin{minipage}[c]{0.5\linewidth}
    \centering
    \includegraphics[width=\textwidth]{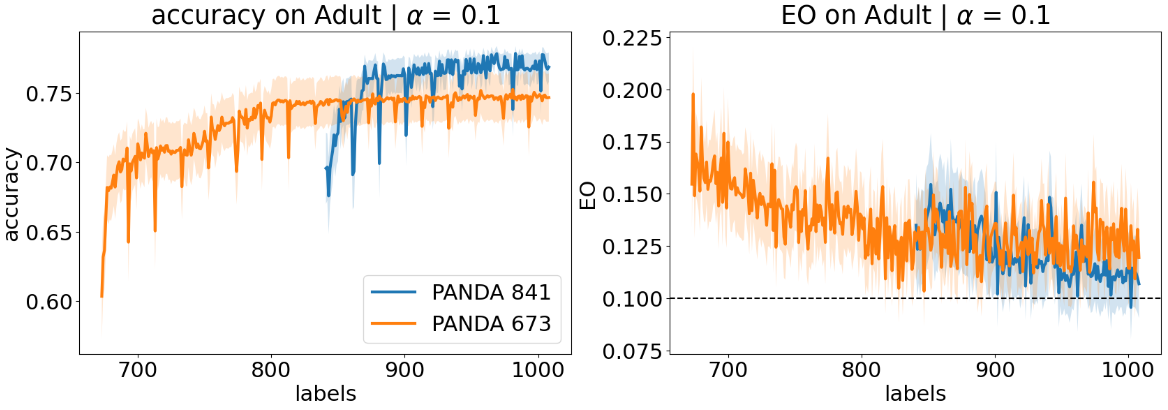}
    \caption{Performance on \texttt{Adult Income} for Equalized Odds}
    \label{fig:adulteo_panda}
\end{minipage}
\hfill
\begin{minipage}[c]{0.5\linewidth}
    \centering
    \includegraphics[width=\textwidth]{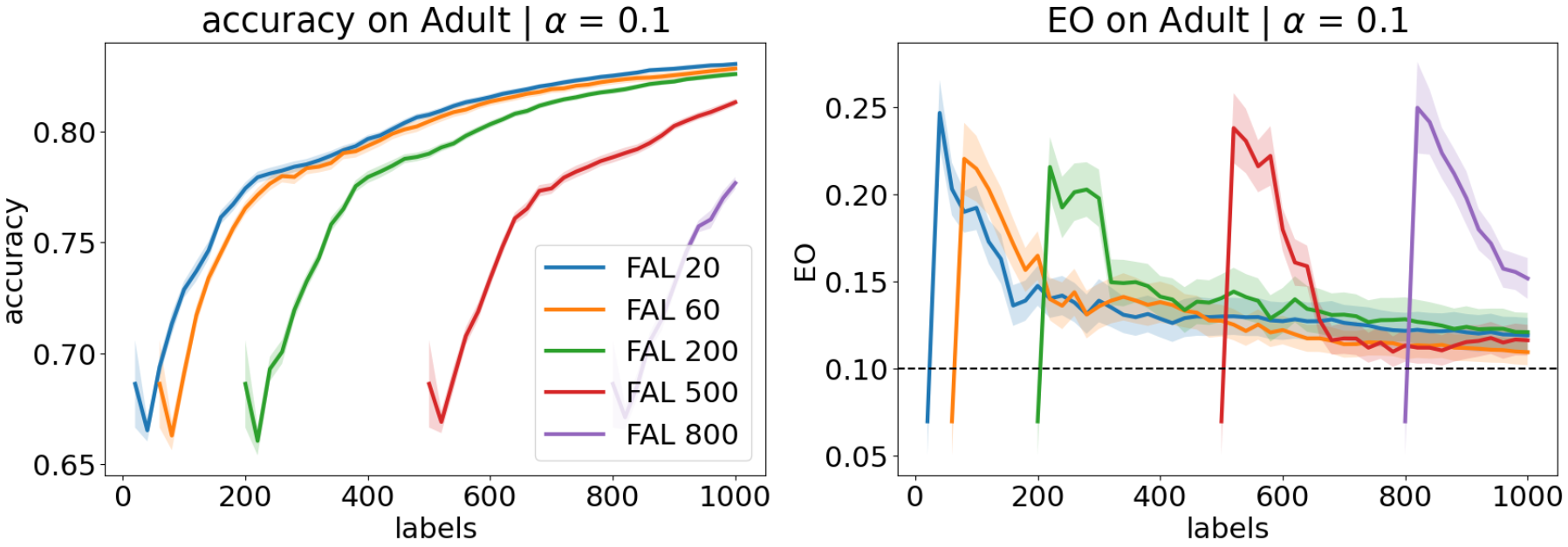}
    \caption{Performance on \texttt{Adult Income} for Equalized Odds}
    \label{fig:adulteo_fal}
\end{minipage}
\end{figure*}
}{

\begin{figure*}
\begin{minipage}[c]{0.45\linewidth}
    \centering
    \includegraphics[width=\textwidth]{new_final_plots/sweepPanda/drugPANDA.png}
    \caption{Performance on \texttt{Drug Consumption}}
    \label{fig:drug_panda}
\end{minipage}
\hfill
\begin{minipage}[c]{0.45\linewidth}
    \centering
    \includegraphics[width=\textwidth]{new_final_plots/sweepPanda/bankPANDA.png}
    \caption{Performance on \texttt{Bank}}
    \label{fig:bank_panda}
\end{minipage}
\\
\begin{minipage}[c]{0.45\linewidth}
    \centering
    \includegraphics[width=\textwidth]{new_final_plots/sweepPanda/germanPANDA.png}
    \caption{Performance on \texttt{German Credit}}
    \label{fig:german_panda}
\end{minipage}
\hfill
\begin{minipage}[c]{0.45\linewidth}
    \centering
    \includegraphics[width=\textwidth]{new_final_plots/sweepPanda/adultPANDA.png}
    \caption{Performance on \texttt{Adult Income}}
    \label{fig:adult_panda}
\end{minipage}
\\
\begin{minipage}[c]{0.45\linewidth}
    \centering
    \includegraphics[width=\textwidth]{new_final_plots/sweepPanda/compasPANDA.png}
    \caption{Performance on \texttt{Compas}}
    \label{fig:compas_panda}
\end{minipage}
\hfill
\begin{minipage}[c]{0.45\linewidth}
    \centering
    \includegraphics[width=\textwidth]{new_final_plots/sweepPanda/crimePANDA.png}
    \caption{Performance on \texttt{Community and Crime}}
    \label{fig:crime_panda}
\end{minipage}
\end{figure*}

\begin{figure*}
\begin{minipage}[c]{0.45\linewidth}
    \centering
    \includegraphics[width=\textwidth]{new_final_plots/sweepFAL/drugFAL.png}
    \caption{Performance on \texttt{Drug Consumption}}
    \label{fig:drug_fal}
\end{minipage}
\hfill
\begin{minipage}[c]{0.45\linewidth}
    \centering
    \includegraphics[width=\textwidth]{new_final_plots/sweepFAL/bankFAL.png}
    \caption{Performance on \texttt{Bank}}
    \label{fig:bank_fal}
\end{minipage}
\\
\begin{minipage}[c]{0.45\linewidth}
    \centering
    \includegraphics[width=\textwidth]{new_final_plots/sweepFAL/germanFAL.png}
    \caption{Performance on \texttt{German Credit}}
    \label{fig:german_fal}
\end{minipage}
\hfill
\begin{minipage}[c]{0.45\linewidth}
    \centering
    \includegraphics[width=\textwidth]{new_final_plots/sweepFAL/adultFAL.png}
    \caption{Performance on \texttt{Adult Income}}
    \label{fig:adult_fal}
\end{minipage}
\\
\begin{minipage}[c]{0.45\linewidth}
    \centering
    \includegraphics[width=\textwidth]{new_final_plots/sweepFAL/compasFAL.png}
    \caption{Performance on \texttt{Compas}}
    \label{fig:compas_fal}
\end{minipage}
\hfill
\begin{minipage}[c]{0.45\linewidth}
    \centering
    \includegraphics[width=\textwidth]{new_final_plots/sweepFAL/crimeFAL.png}
    \caption{Performance on \texttt{Community and Crime}}
    \label{fig:crime_fal}
\end{minipage}
\end{figure*}

\begin{figure*}
\begin{minipage}[c]{0.45\linewidth}
    \centering
    \includegraphics[width=\textwidth]{new_final_plots/sweepPanda/adulteoPANDA.png}
    \caption{Performance on \texttt{Adult Income} for Equalized Odds}
    \label{fig:adulteo_panda}
\end{minipage}
\hfill
\begin{minipage}[c]{0.45\linewidth}
    \centering
    \includegraphics[width=\textwidth]{new_final_plots/sweepFAL/adulteoFAL.png}
    \caption{Performance on \texttt{Adult Income} for Equalized Odds}
    \label{fig:adulteo_fal}
\end{minipage}
\end{figure*}

}

\section{Theoretical results - proof of \Cref{cor:fairness_est_simple}}
\subsection{Full theorem}
We have the following result.
\begin{theorem}\label{thm:fairness_est}
Let the train set be  $\mc{D} = \{(x_1, a_1, y_1), \ldots,(x_n, a_n, y_n)\}$. If $\mc{D}\sim \nu$, then it holds with probability $1-\delta$ that:
\begin{align*}
    &|L^{\rm EO}_\nu(h) - \widehat{L}^{\rm EO}_\mc{D}(h)| \leq C_{0,0}+C_{0,1}+C_{1,0}+C_{1,1},\\
    &|L^{\rm TP}_\nu(h) - \widehat{L}^{\rm TP}_\mc{D}(h)| \leq C_{0,1}+C_{1,1},\\
    &|L^{\rm FP}_\nu(h) - \widehat{L}^{\rm FP}_\mc{D}(h)| \leq C_{0,0}+C_{1,0},
\end{align*}
with confidence terms 
\iftoggle{arxiv}{
\begin{align*}
    &C_{j, k} =\left( \widehat{p}_{j,k} +  \sqrt{2\widehat{\V}^{(1)}_{j,k} \frac{\log(2/\delta)}{n}} + \frac{\log(2/\delta)}{n}\right)\times\frac{\sqrt{2\widehat{\V}^{(2)}_{j,k} \frac{\log(2/\delta)}{n}} + \frac{\log(2/\delta)}{n}}{\left(\frac{1}{n}\sum_{i=1}^n \1\{y_i = k,  a_i = j \}\right)^2} + \frac{ \sqrt{2\widehat{\V}^{(1)}_{j, k} \frac{\log(2/\delta)}{n}} + \frac{\log(2/\delta)}{n}}{\frac{1}{n}\sum_{i=1}^n \1\{y_i = k, a_i = j \}} 
\end{align*}
}{
\begin{align*}
    &C_{j, k} =\left( \widehat{p}_{j,k} +  \sqrt{2\widehat{\V}^{(1)}_{j,k} \frac{\log(2/\delta)}{n}} + \frac{\log(2/\delta)}{n}\right)\times\\
    &\qquad\qquad\qquad\qquad\times\frac{\sqrt{2\widehat{\V}^{(2)}_{j,k} \frac{\log(2/\delta)}{n}} + \frac{\log(2/\delta)}{n}}{\left(\frac{1}{n}\sum_{i=1}^n \1\{y_i = k,  a_i = j \}\right)^2}
    \\
    &\qquad\qquad
    + \frac{ \sqrt{2\widehat{\V}^{(1)}_{j, k} \frac{\log(2/\delta)}{n}} + \frac{\log(2/\delta)}{n}}{\frac{1}{n}\sum_{i=1}^n \1\{y_i = k, a_i = j \}} 
\end{align*}
}
for label $k\in\{0, 1\}$ and protected attribute $j\in\{0, 1\}$, where $\widehat{p}_{j,k}=\frac{1}{n}\sum_{i=1}^n \1\{h(x_i) = 1,y_i = k , a_i = j \}$ and the empirical variances defined as
\iftoggle{arxiv}{
\begin{align*}
&\widehat{\V}^{(1)}_{j,k} = \frac{1}{n(n-1)}\sum_{1 \leq \ell < \ell' \leq n} (\1\{h(x_\ell) = 1, y_\ell = k, a_\ell = j \} - \1\{h(x_{\ell'}) = 1, y_{\ell'} = k, a_{\ell'} = j \} )^2,\\
&\widehat{\V}^{(2)}_{j,k} = \frac{1}{n(n-1)}\sum_{1 \leq \ell < \ell' \leq n} (\1\{y_\ell = k, a_\ell = j \} - \1\{y_{\ell'} = k, a_{\ell'} = j \} )^2.
\end{align*}}{
\begin{align*}
&\widehat{\V}^{(1)}_{j,k} = \frac{1}{n(n-1)}\sum_{1 \leq \ell < \ell' \leq n} (\1\{h(x_\ell) = 1, y_\ell = k, a_\ell = j \} \\
&\qquad\qquad\qquad - \1\{h(x_{\ell'}) = 1, y_{\ell'} = k, a_{\ell'} = j \} )^2,\\
&\widehat{\V}^{(2)}_{j,k} = \frac{1}{n(n-1)}\sum_{1 \leq \ell < \ell' \leq n} (\1\{y_\ell = k, a_\ell = j \}\\
&\qquad\qquad\qquad - \1\{y_{\ell'} = k, a_{\ell'} = j \} )^2.
\end{align*}
}
\end{theorem}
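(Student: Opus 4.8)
The strategy is to reduce the deviation of each fairness violation to the deviation of the individual group-conditional rates, each of which is a ratio of two empirical means, and then to control each such ratio via concentration applied separately to its numerator and denominator. For $j\in\{0,1\}$ and $k\in\{0,1\}$ write the population rate as $r_{j,k}=p_{j,k}/q_{j,k}$ with $p_{j,k}=\E[\1\{h(x)=1,y=k,a=j\}]$ and $q_{j,k}=\E[\1\{y=k,a=j\}]$, and let $\widehat r_{j,k}=\widehat p_{j,k}/\widehat q_{j,k}$ be the plug-in, where $\widehat q_{j,k}=\frac1n\sum_i\1\{y_i=k,a_i=j\}$. Since $L^{\rm EO}_\nu(h)=\max_z|r_{0,z}-r_{1,z}|$ and $\widehat L^{\rm EO}_{\mc D}(h)=\max_z|\widehat r_{0,z}-\widehat r_{1,z}|$, the elementary inequalities $|\max_z f(z)-\max_z g(z)|\le\max_z|f(z)-g(z)|$ and $\big||a|-|b|\big|\le|a-b|$ give
\begin{align*}
|L^{\rm EO}_\nu(h)-\widehat L^{\rm EO}_{\mc D}(h)|\le\max_{z\in\{0,1\}}\sum_{j\in\{0,1\}}|r_{j,z}-\widehat r_{j,z}|\le\sum_{j,k}|r_{j,k}-\widehat r_{j,k}|,
\end{align*}
and analogously the TPRP (resp.\ FPRP) deviation is controlled by the two terms with $k=1$ (resp.\ $k=0$). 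It therefore suffices to show $|r_{j,k}-\widehat r_{j,k}|\le C_{j,k}$ with high probability for each $(j,k)$.

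Next I would apply an empirical-Bernstein inequality (e.g.\ Maurer--Pontil) to numerator and denominator separately. Each summand $\1\{h(x_i)=1,y_i=k,a_i=j\}$ and $\1\{y_i=k,a_i=j\}$ is a $\{0,1\}$-valued i.i.d.\ variable, so with high probability $|\widehat p_{j,k}-p_{j,k}|\le\varepsilon^{(1)}_{j,k}$ and $|\widehat q_{j,k}-q_{j,k}|\le\varepsilon^{(2)}_{j,k}$, where $\varepsilon^{(m)}_{j,k}=\sqrt{2\widehat{\V}^{(m)}_{j,k}\log(2/\delta)/n}+\log(2/\delta)/n$ and the $\widehat{\V}^{(m)}_{j,k}$ are precisely the unbiased sample variances of the two indicator summands (the pairwise form $\frac{1}{n(n-1)}\sum_{\ell<\ell'}(Z_\ell-Z_{\ell'})^2$ equals the usual $\frac{1}{n-1}\sum_\ell(Z_\ell-\bar Z)^2$). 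A union bound over the eight events (two choices each of $j$, $k$, and numerator/denominator), after rescaling $\delta$, makes these hold simultaneously with probability $1-\delta$.

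Finally I would convert the numerator/denominator deviations into a ratio deviation. Writing $\widehat p q-p\widehat q=q(\widehat p-p)+p(q-\widehat q)$ gives
\begin{align*}
\Big|\frac{\widehat p_{j,k}}{\widehat q_{j,k}}-\frac{p_{j,k}}{q_{j,k}}\Big|\le\frac{\varepsilon^{(1)}_{j,k}}{\widehat q_{j,k}}+\frac{p_{j,k}\,\varepsilon^{(2)}_{j,k}}{\widehat q_{j,k}\,q_{j,k}}.
\end{align*}
Bounding the population numerator by $p_{j,k}\le\widehat p_{j,k}+\varepsilon^{(1)}_{j,k}$ and lower-bounding the surviving population denominator by $q_{j,k}\ge\widehat q_{j,k}-\varepsilon^{(2)}_{j,k}$ produces a bound of the claimed shape, the displayed $\widehat q_{j,k}^2$ denominator arising after simplifying $\widehat q_{j,k}(\widehat q_{j,k}-\varepsilon^{(2)}_{j,k})$ in the regime where the concentration widths are small relative to the group-label frequencies; the discarded cross terms are exactly what leave the $\mc{O}(1/n)$ remainder in the simplified \Cref{cor:fairness_est_simple}. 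Summing over the relevant $(j,k)$ as above completes all three inequalities.

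I expect the main obstacle to be this last ratio step: numerator and denominator concentrate directly, but the ratio must be bounded while keeping the final expression in purely empirical quantities, which forces one to lower-bound the population denominator $q_{j,k}$ and to carefully track the lower-order terms this introduces (a naive substitution $q_{j,k}\mapsto\widehat q_{j,k}$ is not a valid upper bound at finite $n$). This step is also where the inverse dependence on $\min_{j,k}\widehat q_{j,k}$, i.e.\ on the smallest group-label count, enters: the bound degrades sharply when some group-label cell is rarely sampled, which is precisely the phenomenon motivating the attribute-balanced exploration distribution $\lamfair$.
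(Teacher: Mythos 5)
Your proposal is correct and follows essentially the same route as the paper's proof: empirical Bernstein (Maurer--Pontil) concentration applied separately to each cell's numerator and denominator, a triangle-inequality conversion of these into a ratio deviation bounded by $C_{j,k}$ (replacing the population numerator by $\widehat p_{j,k}+\varepsilon^{(1)}_{j,k}$), and a reduction of the TPRP/FPRP/EO deviations to sums of the per-cell terms. The caveat you flag at the ratio step---that obtaining the $\widehat q_{j,k}^{\,2}$ denominator requires the concentration width to be small relative to the empirical cell frequency---is present in the paper's proof as well, which imposes the analogous condition $\alpha^{(\mathrm{den})}_j \le \widehat{\mathrm{den}}_j/2$ at exactly the same point.
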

This theorem provides a confidence bound on the concentration rate of the empirical fairness violation. 
\begin{proof}
Let us start by proving the statement for $\rm TPRP$. Recall 
\iftoggle{arxiv}{
\begin{align*}
    &L^{\rm TP}_\nu(h) = \Bigg | \frac{P_{(x, a, y)\sim\nu}(h(x)=1,a=0, y=1)}{P_{(x, a, y)\sim\nu}(a=0, y=1)} - \frac{P_{(x, a, y)\sim\nu}(h(x)=1,a=1, y=1)}{P_{(x, a, y)\sim\nu}(a=1, y=1)} \Bigg |\\
    &\widehat{L}^{\rm TP}_\mc{D}(h) = \Bigg | \sum_{i=1}^{n}\frac{\1\{h(x_i)=1,y_i=1,a_i=1\}}{\sum_{i=1}^{n}\1\{y_i=1,a_i=1\}} - \sum_{i=1}^{n}\frac{\1\{h(x_i)=1,y_i=1,a_i=0\}}{\sum_{i=1}^{n}\1\{y_i=1,a_i=0\}} \Bigg |.
\end{align*}
}{
\begin{align*}
    &L^{\rm TP}_\nu(h) = \Bigg | \frac{P_{(x, a, y)\sim\nu}(h(x)=1,a=0, y=1)}{P_{(x, a, y)\sim\nu}(a=0, y=1)} \\
    &\qquad\qquad\qquad- \frac{P_{(x, a, y)\sim\nu}(h(x)=1,a=1, y=1)}{P_{(x, a, y)\sim\nu}(a=1, y=1)} \Bigg |\\
    &\widehat{L}^{\rm TP}_\mc{D}(h) = \Bigg | \sum_{i=1}^{n}\frac{\1\{h(x_i)=1,y_i=1,a_i=1\}}{\sum_{i=1}^{n}\1\{y_i=1,a_i=1\}} \\
    &\qquad\qquad\qquad- \sum_{i=1}^{n}\frac{\1\{h(x_i)=1,y_i=1,a_i=0\}}{\sum_{i=1}^{n}\1\{y_i=1,a_i=0\}} \Bigg |.
\end{align*}
}
and write these for short
\begin{align*}
&L^{\rm TP}_\nu(h) = | \text{num}_0/ \text{den}_0 -  \text{num}_1 / \text{den}_1|,\\
&\widehat{L}^{\rm TP}_\mc{D}(h) = | \widehat{\text{num}}_0/ \widehat{\text{den}}_0 -  \widehat{\text{num}}_1 / \widehat{\text{den}}_1|,
\end{align*}
with for protected attribute $j\in\{0, 1\}$,
\begin{align*}
&\text{num}_j = P_{(x, a, y)\sim\nu}(h(x)=1,a=j, y=1)\\
&\widehat{\text{num}}_j = \frac{1}{n}\sum_{i=1}^{n}\1\{h(x_i)=1,y_i=1,a_i=j\}\\
&\text{den}_j = P_{(x, a, y)\sim\nu}(a=j, y=1)\\
&\widehat{\text{den}}_j = \frac{1}{n}\sum_{i=1}^{n}\1\{y_i=1,a_i=j\}.
\end{align*}
Applying Bernstein's concentration bound it holds that for $j\in\{0,1\}$ with probability at least $1-\delta$
\iftoggle{arxiv}{
\begin{align*}
    |\widehat{\text{num}}_j - \text{num}_j| 
    &= \Bigg|\frac{1}{n}\sum_{i=1}^n \1\{h(x_i) = 1, y_i = 1, a_i = j \} - \P_{(x, a, y)\sim\nu}(h(x) = 1, y = 1, a = j )\Bigg|\\
    &\leq \sqrt{2\widehat{\V}^{(1)}_{j,1} \frac{\log(2/\delta)}{n}} + \frac{\log(2/\delta)}{n} =: \alpha^{(\text{num})}_j,
\end{align*}
}{
\begin{align*}
    |\widehat{\text{num}}_j - \text{num}_j| 
    &= \Bigg|\frac{1}{n}\sum_{i=1}^n \1\{h(x_i) = 1, y_i = 1, a_i = j \} \\
    &\qquad\qquad- \P_{(x, a, y)\sim\nu}(h(x) = 1, y = 1, a = j )\Bigg|\\
    &\leq \sqrt{2\widehat{\V}^{(1)}_{j,1} \frac{\log(2/\delta)}{n}} + \frac{\log(2/\delta)}{n} =: \alpha^{(\text{num})}_j,
\end{align*}
}
where we defined 
\iftoggle{arxiv}{
\begin{align*}
&\widehat{\V}^{(1)}_{j,k} = \frac{1}{n(n-1)}\sum_{1 \leq \ell < \ell' \leq n} (\1\{h(x_\ell) = 1, y_\ell = k, a_\ell = j \} - \1\{h(x_{\ell'}) = 1, y_{\ell'} = k, a_{\ell'} = j \} )^2.
\end{align*}
}{
\begin{align*}
&\widehat{\V}^{(1)}_{j,k} = \frac{1}{n(n-1)}\sum_{1 \leq \ell < \ell' \leq n} (\1\{h(x_\ell) = 1, y_\ell = k, a_\ell = j \} \\
&\qquad\qquad\qquad\qquad- \1\{h(x_{\ell'}) = 1, y_{\ell'} = k, a_{\ell'} = j \} )^2.
\end{align*}
}
Also applying Bernstein's concentration bound it holds that for $j\in\{0,1\}$ with probability at least $1-\delta$
\iftoggle{arxiv}{
\begin{align*}
    |\widehat{\text{den}}_j - \text{den}_j| 
    &= \Bigg|\frac{1}{n}\sum_{i=1}^n \1\{y_i = 1, a_i = j \} - \P_{(x, a, y)\sim\nu}(y = 1, a = j )\Bigg|\\
    &\leq \sqrt{2\widehat{\V}^{(2)}_{j,1} \frac{\log(2/\delta)}{n}} + \frac{\log(2/\delta)}{n} =: \alpha^{(\text{den})}_j,
\end{align*}
}{
\begin{align*}
    |\widehat{\text{den}}_j - \text{den}_j| 
    &= \Bigg|\frac{1}{n}\sum_{i=1}^n \1\{y_i = 1, a_i = j \} -\\
    &\qquad\qquad\qquad\qquad \P_{(x, a, y)\sim\nu}(y = 1, a = j )\Bigg|\\
    &\leq \sqrt{2\widehat{\V}^{(2)}_{j,1} \frac{\log(2/\delta)}{n}} + \frac{\log(2/\delta)}{n} =: \alpha^{(\text{den})}_j,
\end{align*}
}
where we defined 
\iftoggle{arxiv}{
\begin{align*}
&\widehat{\V}^{(2)}_{j,k} = \frac{1}{n(n-1)}\sum_{1 \leq \ell < \ell' \leq n} (\1\{y_\ell = k, a_\ell = j \} - \1\{y_{\ell'} = k, a_{\ell'} = j \} )^2.
\end{align*}
}{
\begin{align*}
&\widehat{\V}^{(2)}_{j,k} = \frac{1}{n(n-1)}\sum_{1 \leq \ell < \ell' \leq n} (\1\{y_\ell = k, a_\ell = j \} \\
&\qquad\qquad\qquad\qquad- \1\{y_{\ell'} = k, a_{\ell'} = j \} )^2.
\end{align*}
}
Then, as soon as for both $j=1$ and $j=2$, $\alpha^{(\text{den})}_j\leq \widehat{\text{den}}_j/2$, holds the inequality
\begin{align*}
    \left|\frac{1}{\widehat{\text{den}}_j} - \frac{1}{\text{den}_j}\right| \leq \frac{\alpha^{(\text{den})}_j}{\widehat{\text{den}}_j^2},
\end{align*}
so that for $j\in\{0, 1\}$, we have
\begin{align*}
    \left|\frac{\widehat{\text{num}}_j}{\widehat{\text{den}}_j} - \frac{\text{num}_j}{\text{den}_j}\right| 
    &\!=\! \left|\frac{\widehat{\text{num}}_j}{\widehat{\text{den}}_j} - \frac{\text{num}_j}{\widehat{\text{den}}_j} - \frac{\text{num}_j}{\widehat{\text{den}}_j} - \frac{\text{num}_j}{\text{den}_j}\right| \\
    &\!\leq \!\left|\frac{\widehat{\text{num}}_j}{\widehat{\text{den}}_j} \!-\! \frac{\text{num}_j}{\widehat{\text{den}}_j}\right| \!+\! \left|\frac{\text{num}_j}{\widehat{\text{den}}_j} \!-\! \frac{\text{num}_j}{\text{den}_j}\right| \\
    &\!\leq \! \frac{\alpha^{(\text{num})}_j}{\widehat{\text{den}}_j} + \frac{\text{num}_j \alpha^{(\text{den})}_j}{\widehat{\text{den}}_j^2}\\
    &\!\leq \! \frac{\alpha^{(\text{num})}_j}{\widehat{\text{den}}_j} + \frac{(\alpha^{(\text{num})}_j + \widehat{\text{num}}_j) \alpha^{(\text{den})}_j}{\widehat{\text{den}}_j^2}.
\end{align*}
Note that $C_{j, 1}$ is exactly the last upper bound above, 
\iftoggle{arxiv}{
\begin{align*}
    &C_{j, 1} =\left( \widehat{p}_{j,1} +  \sqrt{2\widehat{\V}^{(1)}_{j,1} \frac{\log(2/\delta)}{n}} + \frac{\log(2/\delta)}{n} \right)\times\frac{\sqrt{2\widehat{\V}^{(2)}_{j,1} \frac{\log(2/\delta)}{n}} + \frac{\log(2/\delta)}{n}}{\left(\frac{1}{n}\sum_{i=1}^n \1\{y_i = 1,  a_i = j \}\right)^2} + \frac{ \sqrt{2\widehat{\V}^{(1)}_{j, 1} \frac{\log(2/\delta)}{n}} + \frac{\log(2/\delta)}{n}}{\frac{1}{n}\sum_{i=1}^n \1\{y_i = 1, a_i = j \}} 
\end{align*}
}{
\begin{align*}
    &C_{j, 1} =\left( \widehat{p}_{j,1} +  \sqrt{2\widehat{\V}^{(1)}_{j,1} \frac{\log(2/\delta)}{n}} + \frac{\log(2/\delta)}{n} \right)\times\\
    &\qquad\qquad\qquad\qquad\times\frac{\sqrt{2\widehat{\V}^{(2)}_{j,1} \frac{\log(2/\delta)}{n}} + \frac{\log(2/\delta)}{n}}{\left(\frac{1}{n}\sum_{i=1}^n \1\{y_i = 1,  a_i = j \}\right)^2}\\
    &\qquad\qquad+ \frac{ \sqrt{2\widehat{\V}^{(1)}_{j, 1} \frac{\log(2/\delta)}{n}} + \frac{\log(2/\delta)}{n}}{\frac{1}{n}\sum_{i=1}^n \1\{y_i = 1, a_i = j \}} 
\end{align*}
}
where $\widehat{p}_{j,1}=\frac{1}{n}\sum_{i=1}^n \1\{h(x_i) = 1,y_i = 1 , a_i = j \}$. Putting it together
\iftoggle{arxiv}{
\begin{align*}
    |L^{\rm TP}_\nu(h) - \widehat{L}^{\rm TP}_\mc{D}(h) | 
    &= | | \text{num}_0/ \text{den}_0 -  \text{num}_1 / \text{den}_1|  - | \widehat{\text{num}}_0/ \widehat{\text{den}}_0 - \widehat{\text{num}}_1 / \widehat{\text{den}}_1| |,\\
    &\leq | \text{num}_0/ \text{den}_0 -  \text{num}_1 / \text{den}_1  -  \widehat{\text{num}}_0/ \widehat{\text{den}}_0 + \widehat{\text{num}}_1 / \widehat{\text{den}}_1| ,\\
    &\leq | \text{num}_0/ \text{den}_0 - \widehat{\text{num}}_0 / \widehat{\text{den}}_0| +| \widehat{\text{num}}_1/ \widehat{\text{den}}_1  -  \text{num}_1 / \text{den}_1 |,\\
    &\leq C_{0, 1} + C_{1, 1}.
\end{align*}
}{
\begin{align*}
    |L^{\rm TP}_\nu(h) - \widehat{L}^{\rm TP}_\mc{D}(h) | 
    &= | | \text{num}_0/ \text{den}_0 -  \text{num}_1 / \text{den}_1| \\
    &\qquad- | \widehat{\text{num}}_0/ \widehat{\text{den}}_0 - \widehat{\text{num}}_1 / \widehat{\text{den}}_1| |,\\
    &\leq | \text{num}_0/ \text{den}_0 -  \text{num}_1 / \text{den}_1 \\
    &\qquad-  \widehat{\text{num}}_0/ \widehat{\text{den}}_0 + \widehat{\text{num}}_1 / \widehat{\text{den}}_1| ,\\
    &\leq | \text{num}_0/ \text{den}_0 - \widehat{\text{num}}_0 / \widehat{\text{den}}_0|\\
    &\qquad+| \widehat{\text{num}}_1/ \widehat{\text{den}}_1  -  \text{num}_1 / \text{den}_1 |,\\
    &\leq C_{0, 1} + C_{1, 1}.
\end{align*}
}
which is the conclusion for TPRP.

As $\widehat{L}^{\rm FP}_\mc{D}(h)$ was defined as the empirical estimate of the FPRP violation by conditioning on $\1\{y_i=0\}$ (instead of $\1\{y_i=1\}$ for TPRP), the proof for the concentration bound on FPRP is analogous to the one of TPRP, with the exception of the conditioning on $\1\{y_i=0\}$ instead of $\1\{y_i=1\}$ for TPRP.

We defined the empirical estimate of the EO violation as the maximum of empirical estimate of the TPRP violation and the empirical estimate of the FPRP violation, $\widehat{L}^{\rm EO}_\mc{D}(h) = \max\{\widehat{L}^{\rm TP}_\mc{D}(h), \widehat{L}^{\rm FP}_\mc{D}(h)\}$, so holds
$$\widehat{L}^{\rm EO}_\mc{D}(h) \leq \widehat{L}^{\rm TP}_\mc{D}(h) + \widehat{L}^{\rm FP}_\mc{D}(h),$$
which immediately leads to the conclusion of \Cref{thm:fairness_est}.
\end{proof}

\subsection{Proof of \Cref{cor:fairness_est_simple}}
We first state the full result that leads to the statement of \Cref{cor:fairness_est_simple}. 
\begin{proposition}\label{cor:fairness_est_simple_full}
    Let the train set be  $\mc{D} = \{(x_1, a_1, y_1), \ldots,(x_n, a_n, y_n)\}$. If $\mc{D}\sim \nu$, then it holds with probability $1-\delta$ that:
\iftoggle{arxiv}{
\begin{align*}
    &|L^{\rm TP}_\nu(h) - \widehat{L}^{\rm TP}_D(h)| \leq 2\max_{j\in\{0, 1\}} \Bigg\{2\left(\frac{\sqrt{2\frac{\log(2/\delta)}{n}} + \frac{\log(2/\delta)}{n} }{\frac{1}{n}\sum_{i=1}^n \1\{y_i = 1, a_i = j \}}\right)  + \left(\frac{\sqrt{2\frac{\log(2/\delta)}{n}} + 2\frac{\log(2/\delta)}{n}}{\frac{1}{n}\sum_{i=1}^n \1\{y_i = 1, a_i = j \}}\right)^2\Bigg\},\\
    &|L^{\rm EO}_\nu(h) - \widehat{L}^{\rm EO}_D(h)| \leq 4\max_{0\leq j, k \leq 1} \Bigg\{2\left(\frac{\sqrt{2\frac{\log(2/\delta)}{n}} + \frac{\log(2/\delta)}{n} }{\frac{1}{n}\sum_{i=1}^n \1\{y_i = k, a_i = j \}}\right) + \left(\frac{\sqrt{2\frac{\log(2/\delta)}{n}} + 2\frac{\log(2/\delta)}{n}}{\frac{1}{n}\sum_{i=1}^n \1\{y_i = k, a_i = j \}}\right)^2\Bigg\}.
\end{align*}
}{
\begin{align*}
    &|L^{\rm TP}_\nu(h) - \widehat{L}^{\rm TP}_D(h)| \leq\\
    &\qquad\qquad 2\max_{j\in\{0, 1\}} \Bigg\{2\left(\frac{\sqrt{2\frac{\log(2/\delta)}{n}} + \frac{\log(2/\delta)}{n} }{\frac{1}{n}\sum_{i=1}^n \1\{y_i = 1, a_i = j \}}\right) \\
    &\qquad\qquad\qquad\qquad+ \left(\frac{\sqrt{2\frac{\log(2/\delta)}{n}} + 2\frac{\log(2/\delta)}{n}}{\frac{1}{n}\sum_{i=1}^n \1\{y_i = 1, a_i = j \}}\right)^2\Bigg\},\\
    &|L^{\rm EO}_\nu(h) - \widehat{L}^{\rm EO}_D(h)| \leq \\
    &\qquad\qquad 4\max_{0\leq j, k \leq 1} \Bigg\{2\left(\frac{\sqrt{2\frac{\log(2/\delta)}{n}} + \frac{\log(2/\delta)}{n} }{\frac{1}{n}\sum_{i=1}^n \1\{y_i = k, a_i = j \}}\right) \\
    &\qquad\qquad\qquad\qquad+ \left(\frac{\sqrt{2\frac{\log(2/\delta)}{n}} + 2\frac{\log(2/\delta)}{n}}{\frac{1}{n}\sum_{i=1}^n \1\{y_i = k, a_i = j \}}\right)^2\Bigg\}.
\end{align*}
    }
\end{proposition}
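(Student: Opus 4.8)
The plan is to obtain \Cref{cor:fairness_est_simple_full} directly from \Cref{thm:fairness_est}, which on an event of probability $1-\delta$ already furnishes the data-dependent bounds $|L^{\rm TP}_\nu(h)-\widehat{L}^{\rm TP}_\mc{D}(h)|\leq C_{0,1}+C_{1,1}$ and $|L^{\rm EO}_\nu(h)-\widehat{L}^{\rm EO}_\mc{D}(h)|\leq C_{0,0}+C_{0,1}+C_{1,0}+C_{1,1}$. Everything that remains is to replace each confidence term $C_{j,k}$ by the simpler (looser) expression in the statement, and then to absorb the two- or four-term sums into a maximum. The whole argument lives on the same $1-\delta$ event as \Cref{thm:fairness_est}, so no additional union bound is needed and the confidence level is preserved.

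First I would crudely bound the empirical variances. Since each of $\widehat{\V}^{(1)}_{j,k}$ and $\widehat{\V}^{(2)}_{j,k}$ is an average of quantities of the form $(\1\{\cdot\}-\1\{\cdot\})^2\in\{0,1\}$, both are at most $1$. Substituting $\widehat{\V}^{(1)}_{j,k},\widehat{\V}^{(2)}_{j,k}\leq 1$ turns every factor $\sqrt{2\widehat{\V}\,\log(2/\delta)/n}$ into $\beta:=\sqrt{2\log(2/\delta)/n}$, so the numerator and denominator deviation radii are each at most $\beta+\gamma$ with $\gamma:=\log(2/\delta)/n$. Next I would bound $\widehat{p}_{j,k}$: because $\1\{h(x_i)=1,y_i=k,a_i=j\}\leq\1\{y_i=k,a_i=j\}$ pointwise, we have $\widehat{p}_{j,k}\leq \hat d_{j,k}$, where $\hat d_{j,k}:=\tfrac1n\sum_{i=1}^n\1\{y_i=k,a_i=j\}$ is exactly the denominator appearing in $C_{j,k}$.

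Plugging these three bounds into the definition of $C_{j,k}$ and expanding the product collapses it to precisely two terms:
$$C_{j,k}\leq (\hat d_{j,k}+\beta+\gamma)\frac{\beta+\gamma}{\hat d_{j,k}^2}+\frac{\beta+\gamma}{\hat d_{j,k}} = 2\,\frac{\beta+\gamma}{\hat d_{j,k}}+\frac{(\beta+\gamma)^2}{\hat d_{j,k}^2}.$$
The first term is already of the advertised form, and the squared term is handled by the trivial relaxation $(\beta+\gamma)^2\leq(\beta+2\gamma)^2$, which matches $((\beta+2\gamma)/\hat d_{j,k})^2$ in the statement. Finally I would combine terms: for TPRP, $C_{0,1}+C_{1,1}\leq 2\max_{j}\{\cdots\}$, and for EO, the four-term sum is at most $4\max_{j,k}\{\cdots\}$, producing the constants $2$ and $4$.

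There is no genuinely hard step here; the argument is a purely algebraic simplification of \Cref{thm:fairness_est}. The only points requiring care are verifying the crude-but-valid bound $\widehat{\V}\leq 1$ (an average of $\{0,1\}$-valued pairwise differences), confirming that $\widehat{p}_{j,k}\leq\hat d_{j,k}$ rather than some larger quantity, and checking that the product $(\hat d_{j,k}+\beta+\gamma)(\beta+\gamma)/\hat d_{j,k}^2$ expands to exactly $\tfrac{\beta+\gamma}{\hat d_{j,k}}+\tfrac{(\beta+\gamma)^2}{\hat d_{j,k}^2}$ with no leftover cross terms, so that the final bound reduces to the two summands claimed.
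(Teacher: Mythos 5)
Your proposal is correct and follows essentially the same route as the paper's own proof: starting from \Cref{thm:fairness_est}, bounding the empirical variances $\widehat{\V}^{(1)}_{j,k},\widehat{\V}^{(2)}_{j,k}$ by $1$ and $\widehat{p}_{j,k}$ by the empirical denominator $\hat d_{j,k}=\frac{1}{n}\sum_{i=1}^n\1\{y_i=k,a_i=j\}$, collapsing each $C_{j,k}$ to $2(\beta+\gamma)/\hat d_{j,k}+(\beta+\gamma)^2/\hat d_{j,k}^2$ in your notation, and then bounding the two-term (resp.\ four-term) sums by $2$ (resp.\ $4$) times their maximum. The only cosmetic difference is that you state explicitly the relaxation $(\beta+\gamma)^2\le(\beta+2\gamma)^2$ needed to match the squared term in the statement, which the paper passes over silently.
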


\begin{proof}[Proof of \Cref{cor:fairness_est_simple} and \ref{cor:fairness_est_simple_full}]
We use \Cref{thm:fairness_est} and for label $k\in\{0, 1\}$ and protected attribute $j\in\{0, 1\}$ we bound $C_{j, k}$.

We first have that the empirical variances are such that $\widehat{\V}^{(1)}_{j,k} \leq 1$ and $\widehat{\V}^{(2)}_{j,k} \leq 1$. Also, 
\iftoggle{arxiv}{
\begin{align*}
\widehat{p}_{j,k} 
&= \frac{1}{n}\sum_{i=1}^n \1\{h(x_i) = 1,y_i = k , a_i = j \}\leq \frac{1}{n}\sum_{i=1}^n \1\{y_i = k , a_i = j \}.
\end{align*}
}{
\begin{align*}
\widehat{p}_{j,k} 
&= \frac{1}{n}\sum_{i=1}^n \1\{h(x_i) = 1,y_i = k , a_i = j \} \\
&\leq \frac{1}{n}\sum_{i=1}^n \1\{y_i = k , a_i = j \}.
\end{align*}
}
Thus, we can bound
\iftoggle{arxiv}{
\begin{align*}
    &C_{j, k} = \left( \widehat{p}_{j,k} +  \sqrt{2\widehat{\V}^{(1)}_{j,k} \frac{\log(2/\delta)}{n}}  + \frac{\log(2/\delta)}{n}\right)\times \frac{\sqrt{2\widehat{\V}^{(2)}_{j,k} \frac{\log(2/\delta)}{n}} + \frac{\log(2/\delta)}{n}}{\left(\frac{1}{n}\sum_{i=1}^n \1\{y_i = k,  a_i = j \}\right)^2} + \frac{ \sqrt{2\widehat{\V}^{(1)}_{j, k} \frac{\log(2/\delta)}{n}}  + \frac{\log(2/\delta)}{n}}{\frac{1}{n}\sum_{i=1}^n \1\{y_i = k, a_i = j \}} \\
    &\qquad\leq 2\left(\frac{\sqrt{2\frac{\log(2/\delta)}{n}} + \frac{\log(2/\delta)}{n} }{\frac{1}{n}\sum_{i=1}^n \1\{y_i = k, a_i = j \}}\right) + \left(\frac{\sqrt{2\frac{\log(2/\delta)}{n}}2\frac{\log(2/\delta)}{n}}{\frac{1}{n}\sum_{i=1}^n \1\{y_i = k, a_i = j \}}\right)^2.
\end{align*}
}{
\begin{align*}
    &C_{j, k} = \left( \widehat{p}_{j,k} +  \sqrt{2\widehat{\V}^{(1)}_{j,k} \frac{\log(2/\delta)}{n}}  + \frac{\log(2/\delta)}{n}\right)\times\\
    &\qquad\qquad\qquad\qquad\times\frac{\sqrt{2\widehat{\V}^{(2)}_{j,k} \frac{\log(2/\delta)}{n}} + \frac{\log(2/\delta)}{n}}{\left(\frac{1}{n}\sum_{i=1}^n \1\{y_i = k,  a_i = j \}\right)^2}\\
    &\qquad\qquad+ \frac{ \sqrt{2\widehat{\V}^{(1)}_{j, k} \frac{\log(2/\delta)}{n}}  + \frac{\log(2/\delta)}{n}}{\frac{1}{n}\sum_{i=1}^n \1\{y_i = k, a_i = j \}} \\
    &\qquad\leq 2\left(\frac{\sqrt{2\frac{\log(2/\delta)}{n}} + \frac{\log(2/\delta)}{n} }{\frac{1}{n}\sum_{i=1}^n \1\{y_i = k, a_i = j \}}\right) \\
    &\qquad\qquad+ \left(\frac{\sqrt{2\frac{\log(2/\delta)}{n}}2\frac{\log(2/\delta)}{n}}{\frac{1}{n}\sum_{i=1}^n \1\{y_i = k, a_i = j \}}\right)^2.
\end{align*}
}
With that result, we conclude for TPRP that
\iftoggle{arxiv}{
\begin{align*}
    |L^{\rm TP}_\nu(h) - \widehat{L}^{\rm TP}_D(h)| &\leq C_{0, 1} + C_{1, 1} \\
    &\leq 2\max_{j\in\{0, 1\}} C_{j, 1} \\
    &\leq 2\max_{j\in\{0, 1\}} \Bigg\{2\left(\frac{\sqrt{2\frac{\log(2/\delta)}{n}} + \frac{\log(2/\delta)}{n} }{\frac{1}{n}\sum_{i=1}^n \1\{y_i = 1, a_i = j \}}\right) + \left(\frac{\sqrt{2\frac{\log(2/\delta)}{n}} + 2\frac{\log(2/\delta)}{n}}{\frac{1}{n}\sum_{i=1}^n \1\{y_i = 1, a_i = j \}}\right)^2\Bigg\}\\
    &= 4\max_{j\in\{0, 1\}} \frac{\sqrt{2\frac{\log(2/\delta)}{n}}}{\frac{1}{n}\sum_{i=1}^n \1\{y_i = 1, a_i = j \}}+\mc{O}\left(\frac{1}{n}\right).
\end{align*}
}{
\begin{align*}
    &|L^{\rm TP}_\nu(h) - \widehat{L}^{\rm TP}_D(h)| \\
    &\qquad\leq C_{0, 1} + C_{1, 1} \\
    &\qquad\leq 2\max_{j\in\{0, 1\}} C_{j, 1} \\
    &\qquad\leq 2\max_{j\in\{0, 1\}} \Bigg\{2\left(\frac{\sqrt{2\frac{\log(2/\delta)}{n}} + \frac{\log(2/\delta)}{n} }{\frac{1}{n}\sum_{i=1}^n \1\{y_i = 1, a_i = j \}}\right) \\
    &\qquad\qquad+ \left(\frac{\sqrt{2\frac{\log(2/\delta)}{n}} + 2\frac{\log(2/\delta)}{n}}{\frac{1}{n}\sum_{i=1}^n \1\{y_i = 1, a_i = j \}}\right)^2\Bigg\}\\
    &\qquad= 4\max_{j\in\{0, 1\}} \frac{\sqrt{2\frac{\log(2/\delta)}{n}}}{\frac{1}{n}\sum_{i=1}^n \1\{y_i = 1, a_i = j \}}+\mc{O}\left(\frac{1}{n}\right).
\end{align*}
}

Analogous bounds conclude for EO:
\iftoggle{arxiv}{
\begin{align*}
    |L^{\rm EO}_\nu(h) - \widehat{L}^{\rm EO}_D(h)| &\leq C_{0, 0} + C_{1, 0} + C_{0, 1} + C_{1, 1} \\
    &\leq 4\max_{j\in\{0, 1\}} C_{j, k} \\
    &\leq 4\max_{0\leq j, k \leq 1} \Bigg\{2\left(\frac{\sqrt{2\frac{\log(2/\delta)}{n}} \frac{\log(2/\delta)}{n} }{\frac{1}{n}\sum_{i=1}^n \1\{y_i = k, a_i = j \}}\right)  + \left(\frac{\sqrt{2\frac{\log(2/\delta)}{n}} + 2\frac{\log(2/\delta)}{n}}{\frac{1}{n}\sum_{i=1}^n \1\{y_i = k, a_i = j \}}\right)^2 \Bigg\}\\
    &=8\max_{0\leq j, k \leq 1} \frac{\sqrt{2\frac{\log(2/\delta)}{n}}}{\frac{1}{n}\sum_{i=1}^n \1\{y_i = k, a_i = j \}}+\mc{O}\left(\frac{1}{n}\right).
\end{align*}
}{
\begin{align*}
    &|L^{\rm EO}_\nu(h) - \widehat{L}^{\rm EO}_D(h)| \\
    &\qquad\leq C_{0, 0} + C_{1, 0} + C_{0, 1} + C_{1, 1} \\
    &\qquad\leq 4\max_{j\in\{0, 1\}} C_{j, k} \\
    &\qquad\leq 4\max_{0\leq j, k \leq 1} \Bigg\{2\left(\frac{\sqrt{2\frac{\log(2/\delta)}{n}} \frac{\log(2/\delta)}{n} }{\frac{1}{n}\sum_{i=1}^n \1\{y_i = k, a_i = j \}}\right) \\
    &\qquad\qquad+ \left(\frac{\sqrt{2\frac{\log(2/\delta)}{n}} + 2\frac{\log(2/\delta)}{n}}{\frac{1}{n}\sum_{i=1}^n \1\{y_i = k, a_i = j \}}\right)^2 \Bigg\}\\
    &\qquad=8\max_{0\leq j, k \leq 1} \frac{\sqrt{2\frac{\log(2/\delta)}{n}}}{\frac{1}{n}\sum_{i=1}^n \1\{y_i = k, a_i = j \}}+\mc{O}\left(\frac{1}{n}\right).
\end{align*}
}
\end{proof}

\end{document}